\documentclass[11pt,letterpaper]{article} 

 \usepackage[margin=1in]{geometry}
\usepackage[utf8]{inputenc} 
\usepackage[T1]{fontenc}    
\usepackage{dsfont}
\usepackage[maxfloats=100]{morefloats}[2015/07/22]
\usepackage{multirow}
\usepackage{wrapfig}
\usepackage[T1]{fontenc}
\usepackage{lmodern}

\usepackage{booktabs}       
\usepackage{amsfonts}       
\usepackage{nicefrac}       
\usepackage{microtype}      
\usepackage{enumerate}
\usepackage{lipsum}
\usepackage{mathtools}
\usepackage{cuted}
\usepackage{float}
\usepackage[dvipsnames,table,xcdraw]{xcolor}
\usepackage{bbm}
\usepackage{varioref}
\usepackage{hyperref}
                                
\usepackage{amssymb} 
\usepackage{rotating}
\usepackage{graphicx}
\usepackage{subcaption}
\usepackage[normalem]{ulem}

\usepackage{amsthm}                                 
\usepackage{url}
\usepackage{amsmath}
\usepackage{amssymb}
\usepackage{mathtools}
\usepackage{hyperref}
\newtheorem{theorem}{Theorem}[section]
\newtheorem{proposition}[theorem]{Proposition}
\newtheorem{lemma}[theorem]{Lemma}

\newtheorem{assumption}[theorem]{Assumption}

\newtheorem{remark}[theorem]{Remark}
\usepackage{algorithm}
\usepackage{algpseudocode}
\usepackage{xcolor}
\usepackage{graphicx}
\usepackage{subcaption}
\newcommand{\me}[1]{{\color{black}#1}}
\newcommand{\fy}[1]{{\color{black}#1}}
\newcommand{\mee}[1]{{\color{black}#1}}

\title{\LARGE \bf
On Performance Guarantees for Federated Learning with Personalized Constraints
}

\author{Mohammadjavad Ebrahimi$^{1}$, Daniel Burbano$^{2}$, and Farzad Yousefian$^{1}$
\thanks{This work was funded in part by the ONR under grant N00014-22-1-2757, and in part by the DOE under grants DE-SC0023303 and DE-SC0025570 and in part by the NSF (CMMI 2443301).}
\thanks{$^{1}$Ebrahimi and Yousefian are with the Department of Industrial and Systems Engineering, Rutgers University, USA. \{mohammadjavad.ebrahimi,farzad.yousefian\}@rutgers.edu. $^{2}$Burbano is with the
Department of Electrical and Computer Engineering, Rutgers University, USA. daniel.burbano@rutgers.edu.}
}

\begin{document}

\maketitle
\thispagestyle{empty}
\pagestyle{empty}

\begin{abstract} Federated learning (FL) has emerged as a communication-efficient algorithmic framework for distributed learning across multiple agents. While standard FL formulations capture unconstrained or globally constrained problems, many practical settings involve heterogeneous resource or model constraints, leading to optimization problems with \mee{agent}-specific feasible sets.
%
Here, we study a personalized constrained federated optimization problem in which each \mee{agent} is associated with a convex local objective and a private constraint set.
We propose PC-FedAvg, a method in which each \mee{agent} maintains cross-estimates of the other \mee{agent}s' variables through a multi-block local decision vector. Each \mee{agent} updates all blocks locally, penalizing infeasibility only in its own block. Moreover, the cross-estimate mechanism enables personalization without requiring consensus or sharing constraint information among \mee{agent}s. 
We establish communication-complexity rates of $\mathcal{O}(\epsilon^{-2})$ for suboptimality and $\mathcal{O}(\epsilon^{-1})$ for \mee{agent}-wise infeasibility. Preliminary experiments on the MNIST and CIFAR-10 datasets validate our theoretical findings.
\end{abstract}

\section{Introduction}
%
%
%
\noindent Federated learning (FL) has emerged as a \fy{promising framework} for coordinating learning across multiple \fy{agents} via a central server while keeping \fy{private} data local~\cite{mcmahan2017communication}. 
This framework is particularly attractive in applications such as cooperative robotics~\cite{majcherczyk2021flow}, distributed sensing and estimation~\cite{predd2006distributed}, power and energy systems~\cite{lin2022ppfl_tsg}, and data-driven control~\cite{berkenkamp2017safe}, where data, computation, and decision-making are naturally distributed across agents.

\fy{In standard methods for FL~\cite{karimireddy2020scaffold,mcmahan2017communication,li2020federated}, the goal is to minimize a global objective $\frac{1}{m}\sum_{i=1}^m f_i(x)$ where $f_i$ denotes the local objective of agent $i$, $m$ denotes the number of participating agents, and $x\in \mathbb{R}^n$ denotes the global decision variable. During each communication round, agents perform several gradient updates on their local data before sending their updated copies to the server. The server then updates the global variable through an aggregation step. Notably, this problem} can equivalently be expressed as
\begin{align*}
\min_{x_1,\ldots,x_m}\  \tfrac{1}{m}\textstyle\sum_{i=1}^m f_i(x_i), \ \text{s.t.} \ \ x_i=x_j,\ \forall i,j\in[m],
\end{align*}
i.e., with consensus constraints enforcing a single shared model. 
In some practical FL deployments, however, \mee{agent}s frequently face heterogeneous resource constraints (e.g., computation, communication, or memory limits), leading to \mee{agent}-dependent feasible sets that \fy{may not} be captured by a single global constraint set~\cite{kairouz2021advances}. 

In this work, we study a distributed personalized constrained optimization problem of the form
\begin{align}
\min_{x_1,\cdots,x_m}\ &f(x_1,\ldots,x_m)\triangleq \frac{1}{m}\sum_{i=1}^m \left(f_i(\bar x) +\me{\frac{\sigma_i}{2}\left\|x_i-\bar x\right\|^2}\right)\nonumber\\
\text{s.t.}\quad & x_i\in X_i\subseteq \mathbb{R}^n,\ \forall i\in[m],
\label{problem: main problem1}
\end{align}
where $\bar x \triangleq \frac{1}{m}\sum_{i=1}^m x_i$, \fy{$\sigma_i>0$ is a local regularization parameter,} and $f_i(\bar x)=\mathbb{E}_{\xi_i\sim\mathcal{D}_i}[\tilde f_i(\bar x,\xi_i)]$ denotes the expected local loss evaluated at the population average. 
Notably, in classical FL methods~\cite{karimireddy2020scaffold,mcmahan2017communication,li2020federated}, consensus is often enforced algorithmically by an overriding step at the beginning of each communication round: the server broadcasts the current global model. However, this \fy{{\it overriding step} may} erase \mee{agent}-specific adaptations, making consensus undesirable when personalization is needed.
We therefore relax the consensus requirement and allow \mee{agent}-specific feasible sets $X_i$, imposing feasibility on individual \mee{agent} variables rather than on the shared aggregate $\bar x$. This yields a multi-block structure (one block per \mee{agent}). Each \mee{agent} updates a full cross-estimate locally but projects only its own block onto $X_i$. At the same time, the regularizer $\frac{\sigma_i}{2}\|x_i-\bar x\|^2$ discourages \mee{agent}s from drifting too far from the population average, providing a controlled level of personalization. 
This structure is intrinsic to personalization under heterogeneous constraints. For instance, $X_i$ can represent an agent-specific resource budget or model-complexity constraint, e.g., $X_i=\{x\in\mathbb{R}^n:\|x\|_1\le \tau_i\}$, where the $\ell_1$ bound promotes sparsity and the budget parameter $\tau_i$ varies across agents to reflect heterogeneous communication, memory, or actuation limits.

These considerations motivate the main contributions of this paper, which are fourfold:
\begin{itemize}
\item[-] We introduce a personalized constrained federated optimization formulation that captures heterogeneous \mee{agent}-specific feasible sets while coupling \mee{agent}s through a regularized global objective.
\item[-] We propose PC-FedAvg, in which each \mee{agent} maintains cross-estimates of the other \mee{agent}s’ variables through a multi-block local decision vector. Each \mee{agent} updates all blocks locally, penalizing infeasibility only in its own block. This design preserves constraint privacy, retains the standard server-\mee{agent} communication structure, and enables personalization without enforcing consensus.
\item[-] We derive lower and upper bounds of the global objective suboptimality and bounds on \mee{agent}-wise infeasibility, with communication-complexity guarantees for both. In particular, we establish the best-known communication complexity~\cite{karimireddy2020scaffold,khaled2020tighter} $\left(\mathcal{O}(\epsilon^{-2})\right)$ for suboptimality, and $\mathcal{O}\left(\epsilon^{-1}\right)$ for \me{infeasibility}.
\item[-] We validate the proposed method on MNIST and CIFAR-10 with heterogeneous $\ell_1$ constraints and compare it with existing FL baselines.
\end{itemize}

The literature on federated learning is vast; here, we focus on the lines of work most relevant to our setting. Constrained optimization has become an important direction in FL. Most existing approaches impose constraints on a shared global model, either through global constraints or identical per-\mee{agent} constraints, and address them using projection-based, primal--dual, or penalty methods~\cite{yuan2021federated,he2024federated}. 
These methods preserve the classical FL structure but still \fy{require the overriding step mentioned earlier}.
Related projection-based methods for constrained distributed convex optimization with shared decision variables have been studied under various communication models, e.g., in~\cite{nedic2009distributed,nedic2010constrained}, and more recently in~\cite{akgun2024projected}, where a lazy step is used to mitigate projection-induced errors under constraints. In parallel, cross-estimate mechanisms have been explored in distributed Nash equilibrium seeking~\cite{nguyen2023geometric}. 
However, these approaches are not designed for FL and typically require frequent peer-to-peer communication, limiting their efficiency.

Personalization in FL has been \fy{recently} studied, particularly under statistical heterogeneity~\cite{kairouz2021advances}. 
Existing personalized FL methods introduce \mee{agent}-specific models through meta-learning and initialization strategies~\cite{fallah2020personalized}, regularization-based coupling~\cite{t2020personalized,li2021ditto}, or shared representations with local components~\cite{collins2021exploiting,shamsian2021personalized}. 
While effective for handling data heterogeneity, these approaches do not explicitly enforce \mee{agent}-specific feasibility constraints.

\section{Preliminaries and Problem Formulation}
\subsection{Notation}
For $m\in\mathbb{N}$, let $[m]\triangleq\{1,\dots,m\}$. For vectors $x,y$, define the Euclidean inner product as $\langle x,y\rangle\triangleq x^\top y$. $\|\cdot\|$ denotes the Euclidean norm for vectors and the induced operator (spectral) norm for matrices. For a matrix $M$ and a vector $v$, $M^\top$ and $v^\top$ denote their transposes. For a set $X\subseteq\mathbb{R}^n$, $\Pi_X[\cdot]$ denotes the Euclidean projection onto $X$, and $\mathrm{dist}(x,X)\triangleq \inf_{y\in X}\|x-y\|$ denotes the distance from $x$ to $X$. For sets $\{X_i\}_{i=1}^m$, $\prod_{i=1}^m X_i$ denotes their Cartesian product. 
The history of Algorithm~\ref{algorithm: main algorithm-block-wise} is defined as $\mathcal{F}_k \triangleq \bigcup_{i=1}^m \mathcal{F}_{i,k}$ for all $k \ge 1$, where $\mathcal{F}_{i,k} \triangleq \{\xi_{i,0},\xi_{i,1},\ldots,\xi_{i,k-1}\}$ for all $k \ge 1$. We denote by $\mathbb{E}[\cdot \mid \mathcal{F}_k]$ the conditional expectation given $\mathcal{F}_k$.  
\subsection{Algorithm Outline}
The PC-FedAvg algorithm is summarized in Algorithm~\ref{algorithm: main algorithm-block-wise}. 
Each \mee{agent} maintains a multi-block decision vector, with one block per \mee{agent}. During local updates, every \mee{agent} updates all blocks. However, the update rules depend on whether a block corresponds to the \mee{agent} itself or to another \mee{agent}. 
Let $x^{(j)}_{i,k}$ denote the $j$th block of \mee{agent} $i$ at iteration $k$. For its own block ($j=i$), \mee{agent} $i$ includes a penalty term that promotes feasibility with respect to its local constraint set $X_i$, using a penalty parameter $\rho>0$. For blocks corresponding to other \mee{agent}s ($j\neq i$), no constraint penalty is applied. In this case, \mee{agent} $i$ updates the cross estimate of \mee{agent} $j$ without requiring access to $X_j$. This block-wise mechanism preserves privacy by not sharing constraint information across \mee{agent}s or with the server, while enabling personalization. 
Unlike classical FL methods, where all \mee{agent}s start each round from the same model, in PC-FedAvg, each block of the multi-block vector is treated independently. At the beginning of each round, the server distributes the averaged version of each block separately. Thus, the aggregation is performed block-wise rather than across the entire vector. Specifically, for each block $j$, the server computes the average of the \mee{agent}s’ updates for that block, which we denote by $\bar{x}^{(j)}_{r+1}$. This block-wise aggregation is a key mechanism enabling personalization. Importantly, no global projection or server-side dual variables are required. 
All constraint handling is performed locally at the \mee{agent} level, ensuring that each \mee{agent} asymptotically enforces its own feasibility constraint without revealing its constraint set. Moreover, consensus among \mee{agent}s is not enforced \fy{but encouraged through incorporating a local regularizer of the form $\frac{\sigma_i}{2}\left\|x_i-\bar x\right\|^2$ in \eqref{problem: main problem1}}. \fy{The} block-wise structure is essential for handling personalized constraints and forms the basis for the convergence guarantees established in Section~\ref{section: Convergence Analysis}.
\begin{algorithm}
\caption{Personalized Constrained Federated Averaging (PC-FedAvg)}
\begin{algorithmic}[1]
\State {\textbf{Input:}} Random initial point ${\bar{{\mathbf x}}}_0 \in {X}$, stepsize $\gamma$, penalty parameter $\rho$, synchronization indices $T_0 := 0$ and $T_r \ge 1$, where $r \ge 1$ is the communication round index.
\For{$r = 0,1,\ldots,R-1$}
\State Server sends $\bar{{x}}^{(j)}_r$  to all $i \!\in \![m]$: ${{x}}^{(j)}_{i,T_r} \!:=\! \bar{{{x}}}^{(j)}_r,\forall j\! \in\! [m]$.
\For{$k = T_r,\ldots,T_{r+1}-1$} 
\State \mee{agent} $i$ generates the random variable ${\xi}_{i,k}$.
\State \mee{agent} $i$ evaluates the average of the blocks ${\bar x}_{i,k}=\tfrac{1}{m}\textstyle\sum_{j=1}^{m}{ x}^{(j)}_{i,k}.$
\State \mee{agent} $i$ evaluates $\tilde{g}_{i,k}:=\nabla \tilde f_i({\bar x}_{i,k},\xi_{i,k})$.
\State \mee{agent} $i$ updates its cross-estimates as follows.

  $
{x}^{(j)}_{i,k+1}:=
\begin{cases}
{ x}^{(j)}_{i,k}-{\gamma}\left(\tfrac{1}{m}\tilde{g}_{i,k}+\rho \left(x^{(j)}_{i,k}-\Pi_{X_i}\left[x^{(j)}_{i,k}\right]\right) \me{+\sigma_i\left(\tfrac{m-1}{m}\right)\left({ x}^{(j)}_{i,k}-{\bar x}_{i,k}\right)}\right), & i= j,\\
{ x}^{(j)}_{i,k}-{\gamma}\left(\tfrac{1}{m}\tilde{g}_{i,k}\me{-\tfrac{\sigma_i}{m}\left({ x}^{(j)}_{i,k}-{\bar x}_{i,k}\right)}\right),& i\neq j.\\
\end{cases}
$

\EndFor
\State Server receives ${x}^{(j)}_{i,T_{r+1}}$ from all \mee{agent}s for all $j \in [m]$ and aggregates $\bar{{x}}^{(j)}_{r+1} := \tfrac{1}{m}\textstyle\sum_{i=1}^{m} {x}^{(j)}_{i,T_{r+1}}$, $\forall j \in [m]$.
\EndFor
\State {\textbf{Output:}} Server returns $\bar{{\mathbf x}}_{R}$.
\end{algorithmic}\label{algorithm: main algorithm-block-wise}
\end{algorithm}

\section{Main Results}\label{section: Convergence Analysis}
In this section, we analyze Algorithm~\ref{algorithm: main algorithm-block-wise}. We first state the standard assumptions and then present the main results for solving problem~\eqref{problem: main problem1}.
\begin{assumption}\em\label{assumption: main assumption 1}
For each \mee{agent} $i\in \fy{[m]}$, assume that $f_i$ is $L_f$-smooth and convex. \fy{Further, each \mee{agent} has access to an unbiased local stochastic gradient oracle, ensuring 
$\mathbb{E}_{\xi_i}\![ \nabla \tilde f_i(x,\xi_{i,k})\mid x]=\nabla f_i(x)$, for $k\geq 0$, and all $x \in \mathbb{R}^n$.}

\end{assumption}
\begin{assumption}\em \label{assumption: main assumption 3}  \fy{There} exists $\nu \ge 0$ such that \fy{for all $i\in[m]$, $\mathbb{E}_{\xi_i}\![\|\nabla \tilde f_i(x,\xi_{i,k})-\nabla f_i(x) \mid x\|^2]\le \nu^2$, for all $x \in \mathbb{R}^n$.}
\end{assumption}

\begin{assumption}\em\label{assumption: main assumption 2}
For each $i\in[m]$, the set $X_i\subseteq\mathbb{R}^n$ is nonempty, closed, and convex. 
\end{assumption}

\fy{The main results of this work are presented in} the following theorem, which provides communication-complexity guarantees for lower and upper bounds on the suboptimality of the global objective, as well as for \mee{agent}-wise infeasibility. 
\begin{theorem}\em\label{theorem: best rho}
Consider Algorithm~\ref{algorithm: main algorithm-block-wise}.  
\fy{Suppose} Assumptions~\ref{assumption: main assumption 1},~\ref{assumption: main assumption 3}, and~\ref{assumption: main assumption 2} hold.  
\fy{Define} \me{$\sigma \triangleq \max_i \sigma_i$, $L \triangleq L_f + \sigma m$, and $L_{G_\rho} \triangleq \frac{L_f + \sigma (m-1)}{m} + \rho$}.  
Suppose the step size satisfies $0 < \gamma \le \min\{1/(6L_{G_\rho}), 1/(5L_{G_\rho}(H-1))\}$, the penalty parameter is set as $\rho := \sqrt{R}$, $m\ge 2$, and $\max_r |T_{r+1}-T_r| \le H$ for all $r \ge 0$.   \me{Assume that \me{$\mathbf{x}^*\triangleq [x_1^*,\ldots,x_m^*]$} is \fy{an} optimal solution of problem~\eqref{problem: main problem1}}. Set $K \triangleq R H$ and let $K^*$ be a discrete uniform random variable on $\{0,\ldots,K-1\}$, i.e., $\mathbb{P}[K^* = \ell] = \frac{1}{K}$ for $\ell = 0,\ldots,K-1$. Then, for some nonnegative constants $D, Q$, and $M$, the following hold.

\noindent(i){ [Suboptimality-Upper Bound]} Let $\epsilon >0$ be an arbitrary scalar such that $\mathbb{E}\left[f(\bar{\mathbf{x}}_{K^*})-f(\mathbf{x}^*)\right]\le \epsilon$. Then, 
\me{\begin{align*}R\!=&\mathcal{O}\!\left( \tfrac{Q^2}{H^2\epsilon^2}\!+\!\tfrac{LH^{2}M}{m\epsilon}+\tfrac{H^4M^2}{\epsilon^2}+\tfrac{LH^{2}\nu^2}{m^2\epsilon}+\tfrac{H^4\nu^4}{m^2\epsilon^2}\right).
\end{align*}}
\noindent(ii){ [Feasibility]} Let $\epsilon >0$ be an arbitrary scalar such that $\mathbb{E}\!\left[\left\| \bar x_{i,K^*} - \Pi_{X_i}\!\left[\bar x_{i,K^*}\right]\right\|^2\right]\le \epsilon$, for all $i$. Then, 
\begin{align*}
R = &\mathcal{O}\!\left(\!\tfrac{mD}{\epsilon}+{{\tfrac{mQ}{H\epsilon}}}\!+\!{\tfrac{{L}^{{2}/{3}} H^{{4}/{3}}M^{{2}/{3}}}{\epsilon^{{2}/{3}}}}\!+\!\tfrac{{mH^2M}{}}{\epsilon}+\!{\tfrac{{L}^{{2}/{3}} H^{{4}/{3}}\nu^{{4}/{3}}}{m^{{2}/{3}}\epsilon^{{2}/{3}}}}\!+\!\tfrac{{H^2\nu^2}{}}{\epsilon} \right).
\end{align*}
\noindent(iii){ [Suboptimality-Lower Bound]} Let $\epsilon >0$ be an arbitrary scalar such that $\mathbb{E}\left[ \fy{f}(\me{\bar{\mathbf{x}}_{K^*}})-\fy{f}(\me{\mathbf{x}^*})\right]\ge -\epsilon$. Then, 
\begin{align*}
R=&\mathcal{O}\left(\!\tfrac{D^2}{\epsilon^2}+{{\tfrac{DQ}{H\epsilon^2}}}\!+\!{\tfrac{{L}^{{2}/{3}}D^{{2}/{3}} H^{{4}/{3}}M^{{2}/{3}}}{{m}^{{2}/{3}}\epsilon^{{4}/{3}}}}+\!\tfrac{{DH^2M}{}}{\epsilon^2} +\!{\tfrac{{L}^{{2}/{3}}D^{{2}/{3}} H^{{4}/{3}}\nu^{{4}/{3}}}{{m}^{{4}/{3}}\epsilon^{{4}/{3}}}}+\tfrac{{DH^2\nu^2}{}}{m\epsilon^2}\!\right).
\end{align*}
\end{theorem}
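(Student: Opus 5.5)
The plan is to view Algorithm~\ref{algorithm: main algorithm-block-wise} as local SGD (FedAvg) applied to a penalized multi-block problem. Stack the blocks as $\mathbf{x}=[x^{(1)};\ldots;x^{(m)}]\in\mathbb{R}^{mn}$ and define agent $i$'s local function
\[
G_{i,\rho}(\mathbf{x})\triangleq \tfrac{1}{m}\Big(f_i(\bar x)+\tfrac{\sigma_i}{2}\|x^{(i)}-\bar x\|^2\Big)\cdot m^{0}+\tfrac{\rho}{2}\,\mathrm{dist}^2(x^{(i)},X_i),
\]
with the scaling fixed so that its block gradients reproduce the two cases of the update exactly. We have $\nabla_{x^{(j)}} f_i(\bar x)=\tfrac1m\nabla f_i(\bar x)$. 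For the regularizer, $\nabla_{x^{(j)}}\tfrac{\sigma_i}{2}\|x^{(i)}-\bar x\|^2$ equals $\sigma_i\tfrac{m-1}{m}(x^{(i)}-\bar x)$ for $j=i$ and $-\tfrac{\sigma_i}{m}(x^{(i)}-\bar x)$ for $j\neq i$. The displayed $j\neq i$ update uses $x^{(j)}$ rather than $x^{(i)}$; I will assume this is the intended reading. The gradient of $\tfrac{\rho}{2}\mathrm{dist}^2$ is $\rho(x-\Pi_{X_i}[x])$. Hence each local step is $\mathbf{x}_{i,k+1}=\mathbf{x}_{i,k}-\gamma\,\tilde\nabla G_{i,\rho}(\mathbf{x}_{i,k})$, and block-wise averaging is simply averaging of the full vectors.

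The global function $G_\rho\triangleq\tfrac1m\sum_i G_{i,\rho}$ satisfies, up to the constant normalization, $G_\rho(\mathbf{x})=f(\mathbf{x})+\tfrac{\rho}{2m}\sum_i\mathrm{dist}^2(x^{(i)},X_i)$. Each $G_{i,\rho}$ is convex, being a sum of convex functions, and is $L_{G_\rho}$-smooth. The stochastic noise enters only through $\tfrac1m\tilde g_{i,k}$ replicated across the $m$ blocks, so its variance is $m\cdot\nu^2/m^2=\nu^2/m$.

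With this reformulation, the first step is the standard convex local-SGD analysis (Khaled et al./Karimireddy et al.) on the virtual average $\bar{\mathbf{x}}_k=\tfrac1m\sum_i\mathbf{x}_{i,k}$. Expanding $\|\bar{\mathbf{x}}_{k+1}-\mathbf{y}\|^2$ for an arbitrary $\mathbf{y}$ and using convexity, smoothness, and unbiasedness, I obtain, for $\gamma\le 1/(6L_{G_\rho})$,
\[
\gamma\,\mathbb{E}[G_\rho(\bar{\mathbf{x}}_k)-G_\rho(\mathbf{y})]\le \mathbb{E}\|\bar{\mathbf{x}}_k-\mathbf{y}\|^2-\mathbb{E}\|\bar{\mathbf{x}}_{k+1}-\mathbf{y}\|^2+\tfrac{\gamma^2\nu^2}{m^2}+c\gamma L_{G_\rho}V_k .
\]
Here $V_k=\tfrac1m\sum_i\mathbb{E}\|\mathbf{x}_{i,k}-\bar{\mathbf{x}}_k\|^2$ is the client drift. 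A drift lemma, using $\gamma\le 1/(5L_{G_\rho}(H-1))$ and resets every at most $H$ steps, gives $V_k\lesssim\gamma^2H^2(M+\nu^2/m)$. Here $M$ bounds the heterogeneity $\tfrac1m\sum_i\|\nabla G_{i,\rho}(\mathbf{x}^*)\|^2$ at the optimum, which I take to be independent of $\rho$: $x_i^*\in X_i$ kills the penalty gradient. Telescoping over $K=RH$ iterations and using $K^*$ gives
\[
\mathbb{E}[G_\rho(\bar{\mathbf{x}}_{K^*})-G_\rho(\mathbf{y})]\le \tfrac{Q}{\gamma K}+\tfrac{\gamma\nu^2}{m^2}+cL_{G_\rho}\gamma^2H^2\big(M+\tfrac{\nu^2}{m}\big),
\]
where $Q$ bounds $\|\bar{\mathbf{x}}_0-\mathbf{y}\|^2$.

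The three parts then follow by choosing $\mathbf{y}$ and exploiting the penalty.

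(i) Take $\mathbf{y}=\mathbf{x}^*$. Since the penalty vanishes at $\mathbf{x}^*$ and is nonnegative at $\bar{\mathbf{x}}_{K^*}$, we get $f(\bar{\mathbf{x}}_{K^*})-f(\mathbf{x}^*)\le G_\rho(\bar{\mathbf{x}}_{K^*})-G_\rho(\mathbf{x}^*)$.

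(ii) Rearrange to bound $\tfrac{\rho}{2m}\sum_i\mathrm{dist}^2$ by the same right-hand side plus $f(\mathbf{x}^*)-f(\bar{\mathbf{x}}_{K^*})$. This last term needs a lower bound. I will use convexity of $f$ at $\mathbf{x}^*$ together with a Lagrange multiplier/KKT bound: for $\mathbf{z}$ near the product set, $f(\mathbf{z})\ge f(\mathbf{x}^*)-D'\sum_i\mathrm{dist}(z_i,X_i)$. Young's inequality then absorbs the distance term into half the penalty at cost $\mathcal{O}(D/\rho)$, where $D$ collects the multiplier/gradient norms at $\mathbf{x}^*$.

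(iii) The same KKT inequality, combined with the infeasibility bound from (ii), gives the lower bound.

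Finally I substitute $\rho=\sqrt R$, so $L_{G_\rho}\approx\sqrt R$ dominates, and the largest admissible $\gamma\sim 1/(\sqrt R H)$. I then balance the terms $Q/(\gamma RH)$, $\gamma\nu^2/m^2$, $L_{G_\rho}\gamma^2H^2M$, and $1/\rho$ against $\epsilon$ to read off the stated powers $\epsilon^{-2}$, $\epsilon^{-1}$, and $\epsilon^{-2/3}$.

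The main obstacle is that $L_{G_\rho}$ grows like $\sqrt R$. This ties the stepsize to $\rho$, so the drift and initialization terms must be tracked carefully to confirm that the rates survive, while $M$ and $D$ must be kept free of $\rho$. The lower bound on $f(\bar{\mathbf{x}})-f(\mathbf{x}^*)$ in terms of the distance to $\prod_i X_i$ requires a multiplier bound. I would try to obtain it from the first-order optimality condition $-\nabla f(\mathbf{x}^*)\in N_{\prod X_i}(\mathbf{x}^*)$, giving $f(\mathbf{z})-f(\mathbf{x}^*)\ge\langle\nabla f(\mathbf{x}^*),\mathbf{z}-\Pi(\mathbf{z})\rangle\ge-\|\nabla f(\mathbf{x}^*)\|\,\mathrm{dist}(\mathbf{z},\prod X_i)$. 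This avoids any constraint qualification.
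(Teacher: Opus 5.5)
Your architecture matches the paper's. You use the penalized multi-block function $G_\rho=f+\rho H$ with the local steps written as (CR); your reading of the $j\neq i$ regularizer term, with $x^{(i)}$, is the one consistent with (CR). You combine a descent recursion for the virtual average with a drift lemma, get (i) from nonnegativity of the penalty, and get (ii)--(iii) from $f(\mathbf z)-f(\mathbf x^*)\ge-\|\nabla f(\mathbf x^*)\|\,\mathrm{dist}(\mathbf z,X)$ (first-order optimality) plus a quadratic/Young step, then set $\rho=\sqrt R$.

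\textbf{The gap.} Where you depart from the paper, the argument breaks: your one-step inequality is not valid for an arbitrary comparator $\mathbf y$, and in particular not for $\mathbf y=\mathbf x^*$.
\begin{itemize}
\item Expanding $\|\bar{\mathbf x}_{k+1}-\mathbf y\|^2$ leaves $\gamma^2\|\nabla G_\rho(\bar{\mathbf x}_k)\|^2$. This can be absorbed into $\gamma(G_\rho(\bar{\mathbf x}_k)-G_\rho(\mathbf y))$ only if $\nabla G_\rho(\mathbf y)=0$.
\item The drift lemma does not give $V_k\lesssim\gamma^2H^2(M+\nu^2/m)$ by itself. It carries a summed term of order $L_{G_\rho}\gamma^2H^2(G_\rho(\bar{\mathbf x}_t)-G_\rho(\mathbf y))$, which comes from $\tfrac1m\sum_i\|\nabla G_{i,\rho}(\bar{\mathbf x}_t)-\nabla G_{i,\rho}(\mathbf y)\|^2\le 2L_{G_\rho}(G_\rho(\bar{\mathbf x}_t)-G_\rho(\mathbf y))$. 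That bound again requires $\sum_i\nabla G_{i,\rho}(\mathbf y)=0$, and the term can be absorbed only if the gap is nonnegative.
\item At $\mathbf x^*$ the penalty gradient does vanish, as you note. But $\nabla G_\rho(\mathbf x^*)=\nabla f(\mathbf x^*)\neq 0$ unless $\mathbf x^*$ happens to minimize $f$ without constraints, and $G_\rho(\bar{\mathbf x})-G_\rho(\mathbf x^*)$ can be negative.
\item Counterexample: take $\nu=0$, the first step of a round (so $V_k=0$), and $\bar{\mathbf x}_k=\mathbf x^*$. Your inequality then reads $0\le-\gamma^2\|\nabla f(\mathbf x^*)\|^2$, which is false.
\end{itemize}
So the $\rho$-free $M$ is obtained through exactly the step that fails.

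\textbf{How the paper handles this, and how you could repair it.} The paper takes $\mathbf y=\mathbf x_\rho^*\in\arg\min G_\rho$, where the gradient vanishes and the gap is nonnegative, and passes to $\mathbf x^*$ only at the end via $G_\rho(\mathbf x_\rho^*)\le G_\rho(\mathbf x^*)$. The cost is that $Q=\|\bar{\mathbf x}_0-\mathbf x_\rho^*\|^2$ and the heterogeneity $\tfrac1m\sum_i\|\nabla G_{i,\rho}(\mathbf x_\rho^*)\|^2$ sit at $\mathbf x_\rho^*$, where the penalty gradient does not vanish. (Its drift lemma actually uses this heterogeneity and then calls it $M$.)

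If you want to keep $\mathbf x^*$, you can repair your route with the inequality you already use for (ii). With $D=\|\nabla f(\mathbf x^*)\|^2$, you have $-\langle\nabla f(\mathbf x^*),\bar{\mathbf x}-\mathbf x^*\rangle\le\|\nabla f(\mathbf x^*)\|\,\mathrm{dist}(\bar{\mathbf x},X)\le G_\rho(\bar{\mathbf x})-G_\rho(\mathbf x^*)+2mD/\rho$, and also $G_\rho(\bar{\mathbf x})-G_\rho(\mathbf x^*)\ge-mD/(2\rho)$. This gives your bound plus an $\mathcal O(mD/\rho)$ term. Part (i) then picks up an extra $m^2D^2/\epsilon^2$ that is not in the displayed bound, though the $\epsilon^{-2}$ order survives.

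\textbf{Step size and $H$-dependence.} With the admissible $\gamma\sim 1/(\sqrt R H)$ you will obtain, for example, $Q^2/\epsilon^2$ and $M^2/\epsilon^2$, not the displayed $Q^2/(H^2\epsilon^2)$ and $H^4M^2/\epsilon^2$. The paper gets its $H$-dependence by plugging in $\gamma=1/\sqrt R$, which violates $\gamma\le 1/(6L_{G_\rho})$ since $L_{G_\rho}>\rho=\sqrt R$. Your balancing can therefore deliver the $\epsilon$-exponents but not the literal expressions. Note that this includes $\epsilon^{-4/3}$ in (iii), which you omitted. You should state explicitly which version you are proving.
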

\begin{remark}\em
Theorem~\ref{theorem: best rho} provides communication-complexity guarantees for solving problem~\eqref{problem: main problem1}. This setting is fundamentally different from classical constrained FL, which typically imposes a single shared constraint on a global model.
Despite this additional heterogeneity and the absence of consensus, the theorem shows that the proposed block-wise algorithm with cross-estimates achieves the same order of communication complexity for optimality as standard unconstrained stochastic FL, namely $\mathcal{O}(\epsilon^{-2})$ for suboptimality, while simultaneously driving each \mee{agent} to feasibility with a rate $\mathcal{O}(\epsilon^{-1})$. In particular, Theorem~\ref{theorem: best rho} implies that \fy{the presence of personalized constraints does not} degrade the order-wise communication complexity compared with classical federated averaging-type methods.
\end{remark}
\begin{remark}\em \fy{Since problem~\eqref{problem: main problem1} is constrained, its optimal solution $\bf{x}^*$ does not necessarily minimize the unconstrained objective $f$. In particular, $f(\bf{x}^*)$ need not attain the global minimum of $f$, so we also require an appropriate lower bound for the optimality metric.}
The infeasibility bound exhibits the standard penalty-parameter tradeoff: a larger $\rho$ enforces feasibility more strongly, but it can worsen the suboptimality bound. \fy{We derive the choice $\rho:=\sqrt{R}$ that} balances these competing effects. In particular, with this choice we retain the standard $\mathcal{O}(\epsilon^{-2})$ communication complexity for suboptimality, while obtaining a better $\mathcal{O}(\epsilon^{-1})$ communication complexity for \mee{agent}-wise infeasibility relative to suboptimality. 
\end{remark}
%
%
\section{Methodology and Convergence Analysis}
In this section, we first reformulate the main problem for the convergence analysis and then present supporting lemmas and propositions that will be used to establish Theorem~\ref{theorem: best rho}.
%
%
To encode feasibility, for each \mee{agent} $i$, we define the squared distance to the local constraint set
\begin{align}
h_i(x_i)\triangleq \tfrac{1}{2}\left\|x_i-\Pi_{X_i}[x_i]\right\|^2,
\end{align}
where $\Pi_{X_i}$ denotes the Euclidean projection onto set $X_i$. 

We then adopt a block representation of the \mee{agent} variables. Specifically, we stack the local variables as $\mathbf{x}\triangleq [x_1,\ldots,x_m]\in\mathbb{R}^{mn}$, and define the linear map $A \triangleq \frac{1}{m}\begin{bmatrix} I_n & I_n & \cdots & I_n \end{bmatrix}\in\mathbb{R}^{n\times mn}$,
so that $\bar x = A\mathbf{x}$. For each \mee{agent} $i$, we introduce the block-selection matrix $B_i \triangleq \begin{bmatrix} 0 & \cdots & 0 & I_n & 0 & \cdots & 0 \end{bmatrix}\in\mathbb{R}^{n\times mn}$, where the $I_n$ block appears in the $i$-th position, ensuring that $B_i\mathbf{x}=x_i$. Under this block representation, the objective and distance functions take the compact form
\begin{align*}
f(\mathbf{x})\!\triangleq\! \tfrac{1}{m}\textstyle\sum_{i=1}^m \mathbb{E}[\tilde F_i(\mathbf{x},\xi_i)],
\ \
H(\mathbf{x})\triangleq \frac{1}{m}\textstyle\sum_{i=1}^m \!H_i(\mathbf{x}),
\end{align*}
where \me{$\tilde F_i(\mathbf{x},\xi_i)\triangleq \tilde f_i(A\mathbf{x},\xi_i)+\tfrac{\sigma_i}{2}\|B_i\mathbf{x}-A\mathbf{x}\|^2$,} and $H_i(\mathbf{x})\triangleq h_i(B_i\mathbf{x})$. Next, we introduce a penalty parameter $\rho>0$ and augment each \mee{agent}'s objective with its local constraint-violation term using the distance function. Specifically, define the block-wise local stochastic function for \mee{agent} $i$ as $\mathbb{E} [g_{i,\rho}(\mathbf{x},\xi_i)]\triangleq \mathbb{E} [\tilde F_i(\mathbf{x},\xi_i)]+\rho\,H_i(\mathbf{x})$, and the global block-wise objective function as $G_\rho(\mathbf{x})\triangleq \frac{1}{m}\textstyle\sum_{i=1}^m G_{i,\rho}(\mathbf{x})$, where $G_{i,\rho}(\mathbf{x})\triangleq\mathbb{E}[g_{i,\rho}(\mathbf{x},\xi_i)]$. 
Intuitively, one may consider minimizing $G_\rho(\mathbf{x})$ where the penalty term $\rho H_i(\mathbf{x})$ pushes iterates toward feasibility by driving each $h_i(x_i)$ toward $0$, while preserving the original coupling through $A\mathbf{x}=\bar x$.
Using the multi-block notation, for $k\ge0$ and each $i=1,\ldots,m$, we can write the update rule in Algorithm~\ref{algorithm: main algorithm-block-wise} compactly as
 \begin{align}{\mathbf x}_{i,k+1}={\mathbf x}_{i,k}-\gamma \nabla g_{i,\rho}(\mathbf{x}_{i,k},\xi_{i,k}).\label{equation: compact update rule}\tag{CR}\end{align}
In the following, we establish basic properties of the penalized objective. 
\begin{lemma}\em\label{lemma:Gi_smooth}
Suppose Assumptions~\ref{assumption: main assumption 1} and~\ref{assumption: main assumption 2} hold. 
Then, for each $i\in[m]$, $G_{i,\rho}$ is $L_{G_{i,\rho}}$-smooth on $\mathbb{R}^{mn}$ with \me{$L_{G_{i,\rho}} \triangleq L_f\|A\|^2 +\sigma_i\| B_i-A\|^2+\rho\,\|B_i\|^2$}, where $\rho>0$ is a penalty parameter. In particular, since \me{$\|B_i\|^2=1$, $\| B_i-A\|^2=\frac{m-1}{m}$, for all $i$, and $\|A\|^2=\tfrac{1}{m}$}, we have \me{$L_{G_\rho} \triangleq L_{G_{i,\rho}}= \frac{L_f+\sigma(m-1)}{m} + \rho$, where $\sigma=\max_i \sigma_i$}.
\end{lemma}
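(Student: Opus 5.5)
The plan is to split $G_{i,\rho}$ into three pieces and bound the Lipschitz constant of each gradient separately. We write
\[
G_{i,\rho}(\mathbf{x}) = f_i(A\mathbf{x}) + \tfrac{\sigma_i}{2}\|(B_i-A)\mathbf{x}\|^2 + \rho\, h_i(B_i\mathbf{x}).
\]
This uses unbiasedness from Assumption~\ref{assumption: main assumption 1} to pass the expectation inside, so that $\mathbb{E}[\tilde F_i(\mathbf{x},\xi_i)] = f_i(A\mathbf{x}) + \tfrac{\sigma_i}{2}\|B_i\mathbf{x}-A\mathbf{x}\|^2$. Gradient Lipschitz constants are subadditive, so it suffices to treat each term and then sum the three constants.

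\textbf{Term 1.} By the chain rule, $\nabla_{\mathbf{x}} f_i(A\mathbf{x}) = A^\top \nabla f_i(A\mathbf{x})$. Using $L_f$-smoothness of $f_i$,
\[
\|A^\top(\nabla f_i(A\mathbf{x})-\nabla f_i(A\mathbf{y}))\| \le \|A\|\,L_f\,\|A(\mathbf{x}-\mathbf{y})\| \le L_f\|A\|^2\|\mathbf{x}-\mathbf{y}\|.
\]

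\textbf{Term 2.} The quadratic term has gradient $\sigma_i (B_i-A)^\top(B_i-A)\mathbf{x}$. This map is linear with operator norm $\sigma_i\|B_i-A\|^2$.

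\textbf{Term 3.} Here we need the one nontrivial fact. Under Assumption~\ref{assumption: main assumption 2}, $h_i(y)=\tfrac12\mathrm{dist}^2(y,X_i)$ is differentiable with $\nabla h_i(y) = y-\Pi_{X_i}[y]$. This gradient is $1$-Lipschitz because $I-\Pi_{X_i}$ is firmly nonexpansive: nonexpansiveness of the projection gives
\[
\|(y-\Pi y)-(z-\Pi z)\|^2 \le \|y-z\|^2 - \|\Pi y-\Pi z\|^2 \le \|y-z\|^2 .
\]
This is the step I expect to need care; I will cite or briefly derive it from the projection inequality $\langle y-\Pi y,\, w-\Pi y\rangle\le 0$ for $w\in X_i$. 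Composing with $B_i$ as in Term~1 then gives the constant $\rho\|B_i\|^2$. Summing the three constants yields $L_{G_{i,\rho}} = L_f\|A\|^2+\sigma_i\|B_i-A\|^2+\rho\|B_i\|^2$.

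\textbf{Norm computations.} $B_iB_i^\top = I_n$, so $\|B_i\|=1$. Also $AA^\top = \tfrac{1}{m^2}\cdot m I_n = \tfrac1m I_n$, so $\|A\|^2=1/m$. For the middle norm, $(B_i-A)(B_i-A)^\top = B_iB_i^\top - B_iA^\top - AB_i^\top + AA^\top$. Since $B_iA^\top = AB_i^\top = \tfrac1m I_n$, this equals $(1-\tfrac2m+\tfrac1m)I_n = \tfrac{m-1}{m}I_n$, giving $\|B_i-A\|^2=\tfrac{m-1}{m}$.

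Substituting these values and bounding $\sigma_i\le\sigma$ gives the uniform constant $L_{G_\rho}=\frac{L_f+\sigma(m-1)}{m}+\rho$. This common value is a valid Lipschitz constant for every $i$, and hence also for the average $G_\rho$.
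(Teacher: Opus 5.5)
Your proposal is correct and follows the paper's proof. Both use the same three-term split, the chain rule through the linear maps $A$, $B_i-A$, $B_i$, and a sum of the three Lipschitz constants; your explicit computations of $\|A\|^2$, $\|B_i\|^2$ and $\|B_i-A\|^2$ fill in values the paper only asserts. Your firm-nonexpansiveness argument, derived from the projection inequality, is in fact the correct way to get that $I-\Pi_{X_i}$ is $1$-Lipschitz; the paper infers it from plain nonexpansiveness of $\Pi_{X_i}$, which only yields the constant $2$. As a minor point, the identity $\mathbb{E}[\tilde F_i(\mathbf{x},\xi_i)]=f_i(A\mathbf{x})+\tfrac{\sigma_i}{2}\|(B_i-A)\mathbf{x}\|^2$ follows from the definition $f_i(\bar x)=\mathbb{E}[\tilde f_i(\bar x,\xi_i)]$, not from gradient unbiasedness.
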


\begin{lemma}\em\label{lemma:Gi_convex}
Suppose Assumptions~\ref{assumption: main assumption 1} and~\ref{assumption: main assumption 2} hold. Then, for each $i\in[m]$, $G_{i,\rho}$ is convex on $\mathbb{R}^{mn}$.
\end{lemma}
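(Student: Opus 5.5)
The plan is to write $G_{i,\rho}$ as a sum of three pieces and show each is convex on $\mathbb{R}^{mn}$; a nonnegative combination of convex functions is then convex. Taking expectations inside the definition, and using the unbiasedness in Assumption~\ref{assumption: main assumption 1}, we have
\begin{align*}
G_{i,\rho}(\mathbf{x}) = f_i(A\mathbf{x}) + \tfrac{\sigma_i}{2}\|(B_i-A)\mathbf{x}\|^2 + \rho\, h_i(B_i\mathbf{x}).
\end{align*}
Interchanging expectation and $f_i$ here is legitimate because $f_i(\bar x)=\mathbb{E}[\tilde f_i(\bar x,\xi_i)]$ by definition, and the quadratic and penalty terms are deterministic.

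The first term is the composition of the convex function $f_i$ (Assumption~\ref{assumption: main assumption 1}) with the linear map $\mathbf{x}\mapsto A\mathbf{x}$. Composition with an affine map preserves convexity, since for $\mathbf{x},\mathbf{y}$ and $\lambda\in[0,1]$ we have $A(\lambda\mathbf{x}+(1-\lambda)\mathbf{y})=\lambda A\mathbf{x}+(1-\lambda)A\mathbf{y}$, and we then apply convexity of $f_i$. The second term is $\tfrac{\sigma_i}{2}\mathbf{x}^\top (B_i-A)^\top(B_i-A)\mathbf{x}$. This is a quadratic form with positive semidefinite matrix and $\sigma_i>0$, so it is convex. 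Equivalently, it is the convex function $\tfrac{\sigma_i}{2}\|\cdot\|^2$ composed with a linear map.

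The third term is the only one needing care: $h_i(y)=\tfrac12\,\mathrm{dist}(y,X_i)^2$ must be convex on $\mathbb{R}^n$. Under Assumption~\ref{assumption: main assumption 2}, the distance function $\mathrm{dist}(\cdot,X_i)$ is convex. To see this, take $y_1,y_2$ with projections $p_1,p_2\in X_i$. The point $\lambda p_1+(1-\lambda)p_2$ lies in $X_i$ by convexity, so
\begin{align*}
\mathrm{dist}(\lambda y_1+(1-\lambda)y_2,X_i)\le \lambda\|y_1-p_1\|+(1-\lambda)\|y_2-p_2\|.
\end{align*}
The distance function is also nonnegative, and $t\mapsto t^2/2$ is convex and nondecreasing on $[0,\infty)$, so $h_i$ is convex. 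Alternatively, one can cite that $h_i$ is the Moreau envelope of the indicator of $X_i$, which is convex, with gradient $y-\Pi_{X_i}[y]$. Composing $h_i$ with the linear map $B_i$ keeps convexity, and multiplying by $\rho>0$ preserves it.

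Summing the three convex terms yields convexity of $G_{i,\rho}$ on $\mathbb{R}^{mn}$. I expect no real obstacle. The only nontrivial ingredient is convexity of the squared distance function, which relies on closedness and convexity of $X_i$ (closedness ensures the projection exists). The rest is the standard rule that convexity is preserved under affine precomposition and nonnegative sums.
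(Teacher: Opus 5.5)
Your proof is correct and follows the same route as the paper. Both split $G_{i,\rho}$ into $f_i(A\mathbf{x})$, the quadratic $\tfrac{\sigma_i}{2}\|(B_i-A)\mathbf{x}\|^2$, and $\rho\, h_i(B_i\mathbf{x})$, then use that convexity survives linear precomposition and nonnegative sums, together with convexity of the squared distance to a closed convex set. The paper only asserts that last fact, whereas you prove it. One small point: invoking unbiasedness from Assumption~\ref{assumption: main assumption 1} is unnecessary, since $\mathbb{E}[\tilde f_i(A\mathbf{x},\xi_i)]=f_i(A\mathbf{x})$ holds by the definition of $f_i$.
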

For ease of analysis, we define
\begin{align*}
\bar{d}_i \triangleq \|\bar x_{i,K^*}-\Pi_{X_i}[\bar x_{i,K^*}]\|,\ \
\bar d \triangleq \|\bar{\mathbf x}_{K^*}-\Pi_X[\bar{\mathbf x}_{K^*}]\|,\ \ \delta_{i,k} \triangleq  \| {\mathbf{x}}_{i,k}-\bar{\mathbf{x}}_k \|^2, \ \ \bar{x}_{i,K^*}\triangleq\tfrac{1}{m}\textstyle\sum_{j=1}^m{x}^{(i)}_{j,K^*}.
\end{align*}

Next, we present supporting lemmas that will be used in the subsequent analysis.

\begin{lemma}\em\label{Lemma: the bound for nabla G_rho(mathbf{x}_k)} Consider Algorithm~\ref{algorithm: main algorithm-block-wise} and let Assumptions~\ref{assumption: main assumption 1},~\ref{assumption: main assumption 3}, and~\ref{assumption: main assumption 2} hold. Then, for $m\ge 2$, we have 
\begin{align*}
&\mathbb{E}\left[\|\tfrac{1}{m}\textstyle\sum_{i=1}^m\!\nabla \! g_{i,\rho}(\mathbf{x}_{i,k},\xi_{i,k})\|^2\right] \le 6L_{G_\rho}\!\mathbb{E}[  G_{\rho}(\bar{\mathbf{x}}_k)\!-\!G_{\rho}(\mathbf{x}_\rho^*) ]+ {3L_{G_\rho}^2}\mathbb{E} \left[\tfrac{1}{m}\textstyle\sum_{i=1}^m \delta_{i,k} \right]+\tfrac{3\nu^2}{m}.
\end{align*}
\end{lemma}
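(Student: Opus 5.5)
We split the averaged stochastic gradient into three pieces, bound each separately, and combine them with $\|a+b+c\|^2\le 3(\|a\|^2+\|b\|^2+\|c\|^2)$. Write
\begin{align*}
\tfrac{1}{m}\textstyle\sum_{i=1}^m\nabla g_{i,\rho}(\mathbf{x}_{i,k},\xi_{i,k})
= \underbrace{\tfrac{1}{m}\textstyle\sum_{i}\big(\nabla g_{i,\rho}(\mathbf{x}_{i,k},\xi_{i,k})-\nabla G_{i,\rho}(\mathbf{x}_{i,k})\big)}_{=:e_k}
+\underbrace{\tfrac{1}{m}\textstyle\sum_{i}\big(\nabla G_{i,\rho}(\mathbf{x}_{i,k})-\nabla G_{i,\rho}(\bar{\mathbf{x}}_k)\big)}_{=:d_k}
+\nabla G_\rho(\bar{\mathbf{x}}_k).
\end{align*}

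\textbf{Noise term.} The penalty and regularizer parts of $g_{i,\rho}$ are deterministic, so only the stochastic gradient of $\tilde f_i$ contributes to $e_k$. Its $i$th summand is $A^\top(\nabla\tilde f_i(A\mathbf{x}_{i,k},\xi_{i,k})-\nabla f_i(A\mathbf{x}_{i,k}))$, and $\|A^\top\|^2=1/m\le 1$, so each summand has second moment at most $\nu^2$ by Assumption~\ref{assumption: main assumption 3}. The $\xi_{i,k}$ are independent across agents conditionally on $\mathcal{F}_k$ and each summand has zero conditional mean by Assumption~\ref{assumption: main assumption 1}. The cross terms therefore vanish, giving $\mathbb{E}\|e_k\|^2\le \frac{1}{m^2}\sum_i\nu^2=\nu^2/m$.

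\textbf{Drift term.} By Jensen's inequality and the $L_{G_\rho}$-smoothness of each $G_{i,\rho}$ (Lemma~\ref{lemma:Gi_smooth}),
\[
\|d_k\|^2\le \tfrac{1}{m}\textstyle\sum_i L_{G_\rho}^2\|\mathbf{x}_{i,k}-\bar{\mathbf{x}}_k\|^2=L_{G_\rho}^2\tfrac1m\textstyle\sum_i\delta_{i,k}.
\]

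\textbf{Optimality term.} $G_\rho$ is convex (Lemma~\ref{lemma:Gi_convex}) and $L_{G_\rho}$-smooth, and we take $\mathbf{x}_\rho^*$ to be a minimizer of $G_\rho$ over $\mathbb{R}^{mn}$. The standard co-coercivity inequality then gives $\|\nabla G_\rho(\bar{\mathbf{x}}_k)\|^2\le 2L_{G_\rho}(G_\rho(\bar{\mathbf{x}}_k)-G_\rho(\mathbf{x}^*_\rho))$.

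Multiplying each bound by $3$ and summing yields $6L_{G_\rho}$, $3L_{G_\rho}^2$ and $3\nu^2/m$, as stated.

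The main obstacle is that the conditional independence and unbiasedness argument for $e_k$ should be used directly, not through the factor-$3$ split, which would lose the $1/m$. We keep the factor $3$ on $e_k$ only to match the stated constant. We must also check that $\mathbf{x}^*_\rho$ exists. This should follow from $\rho H$ and the regularizers being coercive in the relevant directions; if that fails, we instead use $\inf G_\rho$, since the co-coercivity bound only needs $G_\rho(\bar{\mathbf{x}}_k)-\inf G_\rho$. The assumption $m\ge2$ enters only through the norm identities used in Lemma~\ref{lemma:Gi_smooth}.
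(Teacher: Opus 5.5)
Your three-term split, the drift bound via Jensen and Lemma~\ref{lemma:Gi_smooth}, and the bound $\|\nabla G_\rho(\bar{\mathbf{x}}_k)\|^2\le 2L_{G_\rho}(G_\rho(\bar{\mathbf{x}}_k)-G_\rho(\mathbf{x}_\rho^*))$ all coincide with the paper's proof. The only real difference is how you obtain the $1/m$ in the noise term.

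\textbf{How the paper gets the $1/m$.} The paper applies Jensen to $e_k$, which gives $\tfrac{1}{m}\sum_i\mathbb{E}\|A^\top(\nabla\tilde f_i(A\mathbf{x}_{i,k},\xi_{i,k})-\nabla f_i(A\mathbf{x}_{i,k}))\|^2$. It then extracts the $1/m$ from $\|A\|^2=1/m$, since each noise summand is premultiplied by $A^\top$, which spreads the error over $m$ blocks. This uses only the second-moment bound of Assumption~\ref{assumption: main assumption 3}. It needs neither unbiasedness nor any independence.

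\textbf{How you get it.} You discard that $1/m$ by bounding $\|A^\top\|^2\le 1$. You then recover it by cancelling cross terms. That step requires $\xi_{1,k},\ldots,\xi_{m,k}$ to be conditionally independent across agents (and conditionally unbiased) given $\mathcal{F}_k$.

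\textbf{Why this matters.} Cross-agent independence is not among the paper's assumptions. Assumption~\ref{assumption: main assumption 1} only gives per-agent unbiasedness, and the paper only invokes independence of each agent's own samples over time. So your step, as written, needs an extra hypothesis. It is standard in FL, and if you grant it, keeping both effects would give the sharper term $\nu^2/m^2$. For the stated bound, however, it is unnecessary. Your remark that Jensen on $e_k$ ``would lose the $1/m$'' is mistaken here, precisely because of the $A^\top$ scaling.

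\textbf{On the existence of $\mathbf{x}_\rho^*$.} Coercivity of $G_\rho$ does not follow from the assumptions in general; for example, $X_i=\mathbb{R}^n$ with linear $f_i$ makes $G_\rho$ unbounded below. The paper simply takes $\mathbf{x}_\rho^*\in\arg\min G_\rho$, as in Proposition~\ref{proposition: bounds for the penalized function}. Your fallback to a finite $\inf G_\rho$ is fine, since the descent-lemma bound does not even need convexity.
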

\begin{proof}
First, we start with $\mathbb{E}\left[\|\frac{1}{m}\sum_{i=1}^m\nabla g_{i,\rho}(\mathbf{x}_{i,k},\xi_{i,k})\|^2\right]$. Adding and subtracting $\frac{1}{m}\sum_{i=1}^m \nabla G_{i,\rho}(\bar{\mathbf{x}}_k)$ and $\frac{1}{m}\sum_{i=1}^m \nabla G_{i,\rho}({\mathbf{x}}_{i,k})$, yields
\begin{align*}
\mathbb{E}  \left[\| \tfrac{1}{m} \textstyle\sum_{i=1}^m \nabla g_{i,\rho}(\mathbf{x}_{i,k},\xi_{i,k})\|^2\right] & \leq 3\mathbb{E}  \left[\| \tfrac{1}{m} \textstyle\sum_{i=1}^m   \nabla G_{i,\rho}(\bar{\mathbf{x}}_k)\|^2\right]\\ &+ 3\mathbb{E}\left [\|\tfrac{1}{m}\textstyle\sum_{i=1}^m \nabla g_{i,\rho}(\mathbf{x}_{i,k},\xi_{i,k}) - \tfrac{1}{m}\sum_{i=1}^m  \nabla G_{i,\rho}({\mathbf{x}}_{i,k})\|^2\right]\\
&+3\mathbb{E}\left[\|\tfrac{1}{m}\textstyle\sum_{i=1}^m \nabla G_{i,\rho}({\mathbf{x}}_{i,k})-\tfrac{1}{m}\sum_{i=1}^m \nabla G_{i,\rho}(\bar{\mathbf{x}}_k)\|^2\right].
\end{align*}
Invoking Lemma~\ref{lemma:Gi_smooth}, we obtain
\begin{align*}
\mathbb{E}\left[\|\tfrac{1}{m}\textstyle\sum_{i=1}^m\nabla g_{i,\rho}(\mathbf{x}_{i,k},\xi_{i,k})\|^2\right]&\le3\mathbb{E}\left[\| \nabla G_{\rho}(\bar{\mathbf{x}}_k)\|^2\right] +\tfrac{3}{m}\textstyle\sum_{i=1}^m\mathbb{E}\left[\|\nabla g_{i,\rho}(\mathbf{x}_{i,k},\xi_{i,k})- \nabla G_{i,\rho}({\mathbf{x}}_{i,k})\|^2\right]\\
& +\tfrac{3}{m}\textstyle\sum_{i=1}^m\mathbb{E}\left[\|\nabla G_{i,\rho}({\mathbf{x}}_{i,k})- \nabla G_{i,\rho}(\bar{\mathbf{x}}_k)\|^2\right]\\
&\le\tfrac{3}{m}\textstyle\sum_{i=1}^m\mathbb{E}\left[\|A^\top \nabla \tilde f_i(\bar x,\xi_i) -  A^\top \nabla  f_i(\bar x)\|^2\right]\\
& + {3L_{G_\rho}^2}\mathbb{E}\left[\tfrac{1}{m}\textstyle\sum_{i=1}^m\delta_{i,k}]+3\mathbb{E}[\| \nabla G_{\rho}(\bar{\mathbf{x}}_k)\|^2\right].
\end{align*}
Invoking Assumption~\ref{assumption: main assumption 3}, we obtain
\begin{align}
\mathbb{E}\left[\|\tfrac{1}{m}\textstyle\sum_{i=1}^m\nabla g_{i,\rho}(\mathbf{x}_{i,k},\xi_{i,k})\|^2\right]&\me{\le}\tfrac{3}{m}\textstyle\sum_{i=1}^m\|A\|^2\mathbb{E}\left[\| \nabla \tilde f_i(\bar x,\xi_i)- \nabla  f_i(\bar x)\|^2\right]\label{equation: updated version in the first lemma}\\
& + {3L_{G_\rho}^2}\mathbb{E}\left[\tfrac{1}{m}\textstyle\sum_{i=1}^m\delta_{i,k}\right]+3\mathbb{E}\left[\| \nabla G_{\rho}(\bar{\mathbf{x}}_k)\|^2\right]\notag\\
&\le  \tfrac{3\nu^2}{m} +  {3L_{G_\rho}^2} \mathbb{E}\left [ \tfrac{1}{m} \textstyle\sum_{i=1}^m \delta_{i,k}\right]  + 3\mathbb{E} \left[\| \nabla G_{\rho}(\bar{\mathbf{x}}_k)\|^2\right].\notag
\end{align}
Since $\nabla G_{\rho}(\mathbf{x}_\rho^*) = 0$, we add $-\nabla G_{\rho}(\mathbf{x}_\rho^*)$ to the last term and invoke Lemmas~\ref{lemma:Gi_smooth} and~\ref{lemma:Gi_convex} for the last term, we obtain
\begin{align*}
&3\mathbb{E}\left[\| \nabla G_{\rho}(\bar{\mathbf{x}}_k)-\nabla G_{\rho}(\mathbf{x}_\rho^*)\|^2\right]\me{\le} 6L_{G_\rho}\mathbb{E}\left[  G_{\rho}(\bar{\mathbf{x}}_k) - G_{\rho}(\mathbf{x}_\rho^*)\right ] - 6L_{G_\rho}\left\langle \nabla G_{\rho}(\mathbf{x}_\rho^*),\bar{\mathbf{x}}_k-\mathbf{x}_\rho^*\right\rangle.
\end{align*}
Invoking the fact that $\nabla G_{\rho}(\mathbf{x}_\rho^*) = 0$, we obtain
\begin{align*}
3\mathbb{E}\left [\| \nabla G_{\rho}(\bar{\mathbf{x}}_k) - \nabla G_{\rho}(\mathbf{x}_\rho^*)\|^2\right]  \le  6L_{G_\rho} \mathbb{E}  \left[  G_{\rho}(\bar{\mathbf{x}}_k) - G_{\rho}(\mathbf{x}_\rho^*)\right].
\end{align*}
Substituting into~\eqref{equation: updated version in the first lemma} yields the desired result.
\end{proof}

\begin{lemma}\em \label{Lemma: bound for inner product term}
Suppose that Assumptions~\ref{assumption: main assumption 1}, and~\ref{assumption: main assumption 2} hold and consider Algorithm~\ref{algorithm: main algorithm-block-wise}. Then the following holds.
\begin{align*}
&-\tfrac{2}{m}\textstyle\sum_{i=1}^{m}\left\langle \bar{\mathbf{x}}_{k}-\mathbf{x}_\rho^{*},\,\nabla G_{i,\rho}(\mathbf{x}_{i,k})\right\rangle
\me{\le}-2\left(G_{\rho}(\bar{\mathbf{x}}_{k})-G_{\rho}(\mathbf{x}_\rho^{*})\right)
+\tfrac{L_{G_\rho}}{m}\textstyle\sum_{i=1}^{m}\delta_{i,k}.
\end{align*}
\end{lemma}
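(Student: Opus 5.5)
We prove the bound term by term for each $i\in[m]$ and then average, using only the convexity of $G_{i,\rho}$ (Lemma~\ref{lemma:Gi_convex}) and its $L_{G_\rho}$-smoothness (Lemma~\ref{lemma:Gi_smooth}). The idea is to split the inner product at the local iterate $\mathbf{x}_{i,k}$:
\begin{align*}
-\langle \bar{\mathbf{x}}_k-\mathbf{x}_\rho^*,\nabla G_{i,\rho}(\mathbf{x}_{i,k})\rangle
= -\langle \mathbf{x}_{i,k}-\mathbf{x}_\rho^*,\nabla G_{i,\rho}(\mathbf{x}_{i,k})\rangle
-\langle \bar{\mathbf{x}}_k-\mathbf{x}_{i,k},\nabla G_{i,\rho}(\mathbf{x}_{i,k})\rangle .
\end{align*}
We then bound each piece by a function-value difference, the first via convexity and the second via smoothness.

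For the first piece, convexity of $G_{i,\rho}$ gives
\begin{align*}
-\langle \mathbf{x}_{i,k}-\mathbf{x}_\rho^*,\nabla G_{i,\rho}(\mathbf{x}_{i,k})\rangle \le G_{i,\rho}(\mathbf{x}_\rho^*)-G_{i,\rho}(\mathbf{x}_{i,k}).
\end{align*}
For the second piece, the descent lemma for the $L_{G_\rho}$-smooth function $G_{i,\rho}$ is applied between $\mathbf{x}_{i,k}$ and $\bar{\mathbf{x}}_k$:
\begin{align*}
G_{i,\rho}(\bar{\mathbf{x}}_k)\le G_{i,\rho}(\mathbf{x}_{i,k})+\langle \nabla G_{i,\rho}(\mathbf{x}_{i,k}),\bar{\mathbf{x}}_k-\mathbf{x}_{i,k}\rangle+\tfrac{L_{G_\rho}}{2}\delta_{i,k}.
\end{align*}
Rearranged, this reads
\begin{align*}
-\langle \bar{\mathbf{x}}_k-\mathbf{x}_{i,k},\nabla G_{i,\rho}(\mathbf{x}_{i,k})\rangle\le G_{i,\rho}(\mathbf{x}_{i,k})-G_{i,\rho}(\bar{\mathbf{x}}_k)+\tfrac{L_{G_\rho}}{2}\delta_{i,k}.
\end{align*}

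Adding the two bounds, the $G_{i,\rho}(\mathbf{x}_{i,k})$ terms cancel, leaving
\begin{align*}
-\langle \bar{\mathbf{x}}_k-\mathbf{x}_\rho^*,\nabla G_{i,\rho}(\mathbf{x}_{i,k})\rangle\le -\bigl(G_{i,\rho}(\bar{\mathbf{x}}_k)-G_{i,\rho}(\mathbf{x}_\rho^*)\bigr)+\tfrac{L_{G_\rho}}{2}\delta_{i,k}.
\end{align*}
Multiplying by $2/m$, summing over $i$, and using $G_\rho=\frac1m\sum_i G_{i,\rho}$ gives exactly the claimed inequality. The factor $\tfrac{L_{G_\rho}}{m}\sum_i\delta_{i,k}$ arises from $2\cdot\tfrac{L_{G_\rho}}{2}$.

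The main point to check is that the smoothness constant is uniform across $i$. Lemma~\ref{lemma:Gi_smooth} states this, since $L_{G_{i,\rho}}\le L_{G_\rho}$ when $\sigma=\max_i\sigma_i$, so the descent lemma may be applied with $L_{G_\rho}$ for every $i$. No stochasticity enters, because the gradients here are the true $\nabla G_{i,\rho}$. The only other point is the choice of splitting point: anchoring at $\mathbf{x}_{i,k}$, rather than at $\bar{\mathbf{x}}_k$, is what lets convexity and smoothness combine without leaving a gradient-norm residual.
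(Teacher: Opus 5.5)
Your proof is correct and is essentially the paper's argument: split the inner product at $\mathbf{x}_{i,k}$, bound the first piece by convexity and the second by the descent lemma, cancel the $G_{i,\rho}(\mathbf{x}_{i,k})$ terms, and average over $i$. You also note that Lemma~\ref{lemma:Gi_smooth} really gives $L_{G_{i,\rho}}\le L_{G_\rho}$ rather than equality, which is a more careful reading than the paper's and is all the argument needs.
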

\begin{proof}
Starting with the left-hand side, we have
\begin{align*}
&-2\big\langle \bar{\mathbf{x}}_{k}-\mathbf{x}_\rho^{*},\nabla G_{i,\rho}(\mathbf{x}_{i,k})\big\rangle\me{=} -2\big\langle \mathbf{x}_{i,k}-\mathbf{x}_\rho^{*},\nabla G_{i,\rho}(\mathbf{x}_{i,k})\big\rangle   - 2\big\langle \bar{\mathbf{x}}_{k}-\mathbf{x}_{i,k},\nabla G_{i,\rho}(\mathbf{x}_{i,k})\big\rangle .
\end{align*}
Invoking Lemmas~\ref{lemma:Gi_smooth} and~\ref{lemma:Gi_convex}, we obtain
\begin{align*}
-2\big\langle \bar{\mathbf{x}}_{k}-\mathbf{x}_\rho^{*},\nabla G_{i,\rho}(\mathbf{x}_{i,k})\big\rangle& \le 2\left(G_{i,\rho}(\mathbf{x}_\rho^{*})-G_{i,\rho}(\mathbf{x}_{i,k})\right)+ 2(G_{i,\rho}(\mathbf{x}_{i,k})-G_{i,\rho}(\bar{\mathbf{x}}_{k}))
 +{L_{G_\rho}}{}\delta_{i,k}\\
&=2(G_{i,\rho}(\mathbf{x}_\rho^{*})-G_{i,\rho}(\bar{\mathbf{x}}_{k}))
+{L_{G_\rho}}{}\delta_{i,k}.
\end{align*}
By averaging across all $i$, we obtain the desired result.
\end{proof}
\begin{lemma}\em\label{lemma: lemma for bounding the consensus} Consider Algorithm~\ref{algorithm: main algorithm-block-wise}. Assume that Assumptions~\ref{assumption: main assumption 1},~\ref{assumption: main assumption 3}, and~\ref{assumption: main assumption 2} hold, and that $\sup_{r}\,|T_{r+1}-T_r|\le H$. Let $v=T_{r+1}-1$, for any $r\ge 1$, and $\gamma\le \frac{1}{5L_{G_\rho}(H-1)}$, then 
\begin{align*}
\textstyle\sum_{k=T_r}^{v}
  \mathbb{E} \left[ \tfrac{1}{m} \textstyle\sum_{i=1}^{m}  \delta_{i,k}\right ] &  \le 5\gamma^{2}(H  -  1)^{2}\textstyle \sum_{t=T_r}^{v} ( M  +  \frac{\nu^2}{m})\\
&\fy{+} 10\gamma^{2}(H-1)^{2}L_{G_\rho}\textstyle\sum_{t=T_r}^{v}\mathbb{E}\left[(G_{\rho}(\bar{\mathbf{x}}_t)- G_{\rho}({\mathbf{x}}_\rho^*))\right].
\end{align*}
\begin{proof}
Let $T_r \le k \le T_{r+1}-1 = v$. From ~\eqref{equation: compact update rule}, recursively expanding the iterates and using $\mathbf{x}_{T_r}=\mathbf{x}_{i,T_r}$ for all $i$, yields
\begin{align}
\mathbb{E} \left[\tfrac{1}{m}\textstyle\sum_{i=1}^{m}\delta_{i,k}\right] &= \tfrac{\gamma^{2}}{m} \textstyle\sum_{i=1}^{m}  \mathbb{E} \left[
 \|\textstyle\sum_{t=T_r}^{k-1}   (\nabla g_{i,\rho}(\mathbf{x}_{i,t},\xi_{i,t})   - \tfrac{1}{m} \textstyle\sum_{i=1}^{m}  \nabla g_{i,\rho}(\mathbf{x}_{i,t},\xi_{i,t}) )\|^{2}\right] \label{equation: inequality containing the term where it is bounded incorrectly} \\
&\le\!  \tfrac{\gamma^{2}(k-T_r)}{m} \textstyle\sum_{i=1}^{m} \sum_{t=T_r}^{k-1}  \mathbb{E} \left[\|\nabla g_{i,\rho}(\mathbf{x}_{i,t},\xi_{i,t}) \!- \nabla G_{\rho}(\mathbf{x}_{t})\|^{2}\right].\notag
\end{align}
\fy{Recall that $mG_\rho(\mathbf{x})=\sum_{i=1}^m \mathbb{E}[g_{i,\rho}(\mathbf{x},\xi_i)]$}. We obtain
\begin{align*}
\textstyle\sum_{i=1}^{m} \mathbb{E} \left [\big\|\nabla g_{i,\rho}(\mathbf{x}_{i,t},\xi_{i,t})  -  \nabla G_{\rho}(\mathbf{x}_{t})\big\|^{2}\right]  &=  \textstyle\sum_{i=1}^{m}  \|\nabla G_{\rho}(\mathbf{x}_{t})\|^{2} + \textstyle\sum_{i=1}^{m} \mathbb{E}\left [\big\|\nabla g_{i,\rho}(\mathbf{x}_{i,t},\xi_{i,t})\big\|^{2}\right]  \\
& - 2\left\langle m\nabla G_{\rho}(\mathbf{x}_{t}),\nabla G_{\rho}(\mathbf{x}_{t})\right\rangle\le \textstyle\sum_{i=1}^{m}\mathbb{E}\left[\big\|\nabla g_{i,\rho}(\mathbf{x}_{i,t},\xi_{i,t})\big\|^{2}\right].
\end{align*}
Substituting the preceding bound into~\eqref{equation: inequality containing the term where it is bounded incorrectly} and noting that $k-T_r \le T_{r+1}-T_r-1 \le H-1$, we obtain
\begin{align*}
&\mathbb{E}\left [\tfrac{1}{m}\textstyle\sum_{i=1}^{m}\delta_{i,k}\right]\me{\le}\tfrac{\gamma^{2}(H-1)}{m}\textstyle\sum_{i=1}^{m}\sum_{t=T_r}^{k-1}\mathbb{E}\left[\big\|\nabla g_{i,\rho}(\mathbf{x}_{i,t},\xi_{i,t})\big\|^{2}\right].
\end{align*}
Summing over $k$ from $T_r$ to $v$, we have
\begin{align}
\textstyle\sum_{k=T_r}^{v}\mathbb{E}\left [\frac{1}{m}\sum_{i=1}^{m}\delta_{i,k}\right]&\me{\le}\textstyle\sum_{k=T_r+1}^{v}\tfrac{\gamma^{2}(H-1)}{m}\sum_{i=1}^{m}\sum_{t=T_r}^{k-1}\mathbb{E}\left [\big\|\nabla g_{i,\rho}(\mathbf{x}_{i,t},\xi_{i,t})\big\|^{2}\right]\notag\\
&\me{\le}\tfrac{\gamma^{2}(H-1)^{2}}{m}\textstyle\sum_{i=1}^{m}\sum_{t=T_r}^{v}\mathbb{E}\left[\big\|\nabla g_{i,\rho}(\mathbf{x}_{i,t},\xi_{i,t})\big\|^{2}\right].\label{equation:After bounding k-1 by v-1}
\end{align}
To bound $\frac{1}{m}\sum_{i=1}^{m}\mathbb{E}\left[\big\|\nabla g_{i,\rho}(\mathbf{x}_{i,t},\xi_{i,t})\big\|^{2}\right]$, we add and subtract $\nabla G_{i,\rho}(\bar{\mathbf{x}}_{t})$, $\nabla G_{i,\rho}(\mathbf{x}_{\rho}^{*})$, and $\nabla G_{i,\rho}(\mathbf{x}_{i,t})$, yielding
\begin{align}
\tfrac{1}{m} \textstyle\sum_{i=1}^{m} \mathbb{E}\left [\|\nabla g_{i,\rho}(\mathbf{x}_{i,t},\xi_{i,t})\|^{2}\right]
  &\le  \tfrac{4}{m} \textstyle\sum_{i=1}^{m} \mathbb{E}\left [\|\nabla G_{i,\rho}(\mathbf{x}_{\rho}^{*})\|^{2}\right]\notag\\
&+\tfrac{4}{m}\textstyle\sum_{i=1}^{m}\mathbb{E}\left[\|\nabla g_{i,\rho}(\mathbf{x}_{i,t},\xi_{i,t})
-\nabla G_{i,\rho}(\mathbf{x}_{i,t})\|^{2}\right] \notag\\
&+\tfrac{4}{m}\textstyle\sum_{i=1}^{m}\mathbb{E}\left[\|\nabla G_{i,\rho}(\bar{\mathbf{x}}_{t})
-\nabla G_{i,\rho}(\mathbf{x}_{\rho}^{*})\|^{2}\right]\notag\\
&+\tfrac{4}{m}\textstyle\sum_{i=1}^{m}\mathbb{E}\left[\|\nabla G_{i,\rho}(\mathbf{x}_{i,t})-\nabla G_{i,\rho}(\bar{\mathbf{x}}_{t})
\|^{2}\right].\label{equation: bounding the grad by adding and subtracting two terms}
\end{align}
By Lemmas~\ref{lemma:Gi_smooth} and~\ref{lemma:Gi_convex}, $G_{i,\rho}$ is $L_{G_\rho}$-smooth and convex. Invoking the fact that $\tfrac{1}{m}\textstyle\sum_{i=1}^m\nabla G_{i,\rho}({\mathbf{x}}_\rho^*)=0$, we obtain
\begin{align*}\tfrac{1}{m}\textstyle\sum_{i=1}^m\mathbb{E}\left[\|\nabla G_{i,\rho}(\bar{\mathbf{x}}_t)-\nabla G_{i,\rho}({\mathbf{x}}_\rho^*)\|^2\right]&\me{\le}\tfrac{2L_{G_\rho}}{m}\textstyle\sum_{i=1}^m\mathbb{E}\left[G_{i,\rho}(\bar{\mathbf{x}}_t)- G_{i,\rho}({\mathbf{x}}_\rho^*)\right]\\
&-{2L_{G_\rho}}\mathbb{E}\left[\left\langle\tfrac{1}{m}\textstyle\sum_{i=1}^m\nabla G_{i,\rho}({\mathbf{x}}_\rho^*),\bar{\mathbf{x}}_t-{\mathbf{x}}^*\right\rangle\right]\\
&={2L_{G_\rho}}\mathbb{E}\left[\left(G_{\rho}(\bar{\mathbf{x}}_t)- G_{\rho}({\mathbf{x}}_\rho^*)\right)\right].
\end{align*}
Incorporating the preceding bound into~\eqref{equation: bounding the grad by adding and subtracting two terms}, invoking Lemma~\ref{lemma:Gi_smooth} and the definition of $M$, we obtain
\begin{align*}
\tfrac{1}{m}\textstyle\sum_{i=1}^{m}
\mathbb{E}\left[\|\nabla g_{i,\rho}(\mathbf{x}_{i,t},\xi_{i,t})\|^{2}\right]&\me{\le}4L_{G_\rho}^{2}  \mathbb{E}\left [\tfrac{1}{m} \textstyle\sum_{i=1}^{m}\|\mathbf{x}_{i,t} - \bar{\mathbf{x}}_{t}\|^{2}\right]   +  8L_{G_\rho}\mathbb{E} \left[\left(G_{\rho}(\bar{\mathbf{x}}_t) -  G_{\rho}({\mathbf{x}}_\rho^*)\right)\right]\\
& +\tfrac{4}{m}\textstyle\sum_{i=1}^{m}\mathbb{E}\left[\|A^\top \nabla \tilde f_i(\bar x,\xi_i)-  A^\top \nabla  f_i(\bar x)\|^2\right]+4M.
\end{align*}
By Assumption~\ref{assumption: main assumption 3} and $\|A\|^2 = \tfrac{1}{m}$, we obtain
\begin{align*}
&\tfrac{1}{m}  \textstyle\sum_{i=1}^{m}  \mathbb{E} \left [\|\nabla g_{i,\rho}(\mathbf{x}_{i,t},\xi_{i,t})\|^{2}\right]   \le  8L_{G_\rho} \mathbb{E}\left [\left(G_{\rho}(\bar{\mathbf{x}}_t) -  G_{\rho}({\mathbf{x}}_\rho^*)\right)\right]+4L_{G_\rho}^{2}\mathbb{E}\left[\tfrac{1}{m}\textstyle\sum_{i=1}^{m}\delta_{i,k}\right]+4M+\tfrac{4\nu^2}{m}.
\end{align*}
Substituting the preceding bound into~\eqref{equation:After bounding k-1 by v-1} and rearranging the terms, we obtain
\begin{align*}
(1 - 4\gamma^{2}(H - 1)^{2}L_{G_\rho}^{2}) \textstyle\sum_{k=T_r}^{v} \mathbb{E}\left[\tfrac{1}{m}\sum_{i=1}^{m}\delta_{i,k}\right]&\me{\le}8\gamma^{2}(H-1)^{2}L_{G_\rho}\textstyle\sum_{t=T_r}^{v}\mathbb{E}\left[\left(G_{\rho}(\bar{\mathbf{x}}_t)- G_{\rho}({\mathbf{x}}_\rho^*)\right)\right]\\
&+4\gamma^{2}(H-1)^{2}\textstyle\sum_{t=T_r}^{v}(M+\frac{\nu^2}{m}).
\end{align*}
Using $\gamma\le\frac{1}{5L_{G_\rho}(H-1)}$, hence $1-4\gamma^{2}(H-1)^{2}L_{G_\rho}^{2}\ge \frac{4}{5}$, which yields the desired result.
\end{proof}

\end{lemma}
\begin{lemma}\em\label{lemma: [Optimality gap single recursion]}
Consider Algorithm~\ref{algorithm: main algorithm-block-wise} and assume that Assumptions~\ref{assumption: main assumption 1},~\ref{assumption: main assumption 3}, and~\ref{assumption: main assumption 2} hold for some $m \ge 2$. If we choose a stepsize $\gamma > 0$ such that $\gamma \le \frac{1}{6L_{G_\rho}}$, then we have
\begin{align*}
&\mathbb{E}\left[\|\bar{\mathbf{x}}_{k+1}-\mathbf{x}_\rho^*\|^2\right]
\le\tfrac{3}{2}\gamma L_{G_\rho}\mathbb{E}\left[\tfrac{1}{m}\textstyle\sum_{i=1}^m\delta_{i,k}\right]+ \mathbb{E}\left[\|\bar{\mathbf{x}}_k-\mathbf{x}_\rho^*\|^2\right]-\gamma\mathbb{E}\left[G_{\rho}(\bar{\mathbf{x}}_{k})-G_{\rho}(\mathbf{x}_\rho^{*})\right]+\tfrac{3\gamma^2\nu^2}{m}.
\end{align*}
\end{lemma}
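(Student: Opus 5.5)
The plan is the standard one-step analysis of local SGD applied to the averaged iterate $\bar{\mathbf{x}}_k \triangleq \frac{1}{m}\sum_{i=1}^m \mathbf{x}_{i,k}$. Averaging \eqref{equation: compact update rule} over $i$ gives the virtual sequence
\[
\bar{\mathbf{x}}_{k+1}=\bar{\mathbf{x}}_k-\gamma\,\tfrac{1}{m}\textstyle\sum_{i=1}^m\nabla g_{i,\rho}(\mathbf{x}_{i,k},\xi_{i,k}).
\]
This holds for every $k$, including synchronization steps, since the server's block-wise averaging leaves $\bar{\mathbf{x}}$ unchanged. Expanding the square gives
\[
\|\bar{\mathbf{x}}_{k+1}-\mathbf{x}_\rho^*\|^2=\|\bar{\mathbf{x}}_k-\mathbf{x}_\rho^*\|^2-\tfrac{2\gamma}{m}\textstyle\sum_i\langle\bar{\mathbf{x}}_k-\mathbf{x}_\rho^*,\nabla g_{i,\rho}(\mathbf{x}_{i,k},\xi_{i,k})\rangle+\gamma^2\big\|\tfrac1m\textstyle\sum_i\nabla g_{i,\rho}(\mathbf{x}_{i,k},\xi_{i,k})\big\|^2 .
\]

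First, I take conditional expectation with respect to $\mathcal{F}_k$. The iterates $\mathbf{x}_{i,k}$ and $\bar{\mathbf{x}}_k$ are $\mathcal{F}_k$-measurable. By Assumption~\ref{assumption: main assumption 1}, $\mathbb{E}[\nabla g_{i,\rho}(\mathbf{x}_{i,k},\xi_{i,k})\mid\mathcal{F}_k]=\nabla G_{i,\rho}(\mathbf{x}_{i,k})$, because the only stochastic part is $A^\top\nabla\tilde f_i$ and the penalty and regularizer terms are deterministic. So the cross term becomes the deterministic inner product in Lemma~\ref{Lemma: bound for inner product term}. That lemma bounds it, after multiplying by $\gamma$, by
\[
-2\gamma\big(G_\rho(\bar{\mathbf{x}}_k)-G_\rho(\mathbf{x}_\rho^*)\big)+\tfrac{\gamma L_{G_\rho}}{m}\textstyle\sum_i\delta_{i,k}.
\]

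Second, I bound the quadratic term with Lemma~\ref{Lemma: the bound for nabla G_rho(mathbf{x}_k)}, which gives
\[
\gamma^2\Big(6L_{G_\rho}\,\mathbb{E}[G_\rho(\bar{\mathbf{x}}_k)-G_\rho(\mathbf{x}_\rho^*)]+3L_{G_\rho}^2\,\mathbb{E}\big[\tfrac1m\textstyle\sum_i\delta_{i,k}\big]+\tfrac{3\nu^2}{m}\Big).
\]
Taking total expectations and collecting terms, the coefficient of the gap $\mathbb{E}[G_\rho(\bar{\mathbf{x}}_k)-G_\rho(\mathbf{x}_\rho^*)]$ is $-2\gamma+6\gamma^2L_{G_\rho}$. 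The coefficient of $\mathbb{E}[\frac1m\sum_i\delta_{i,k}]$ is $\gamma L_{G_\rho}+3\gamma^2L_{G_\rho}^2$.

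Third, I use the stepsize condition $\gamma\le 1/(6L_{G_\rho})$, so $6\gamma L_{G_\rho}\le 1$. Then $-2\gamma+6\gamma^2L_{G_\rho}\le-\gamma$, and since the gap is nonnegative ($\mathbf{x}_\rho^*$ minimizes $G_\rho$), this term is at most $-\gamma\,\mathbb{E}[G_\rho(\bar{\mathbf{x}}_k)-G_\rho(\mathbf{x}_\rho^*)]$. Likewise $3\gamma^2L_{G_\rho}^2\le\tfrac12\gamma L_{G_\rho}$, so the $\delta$ coefficient is at most $\tfrac32\gamma L_{G_\rho}$. The noise term $\tfrac{3\gamma^2\nu^2}{m}$ carries through unchanged, which gives the stated inequality.

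The main point requiring care is the first step: the conditional-unbiasedness argument must justify replacing the stochastic gradients by $\nabla G_{i,\rho}(\mathbf{x}_{i,k})$ inside the inner product. I would check that $\xi_{i,k}$ is independent of $\mathcal{F}_k$ given $\mathbf{x}_{i,k}$, so the tower property applies. I would also confirm that $\mathbf{x}_\rho^*$, a minimizer of the convex and smooth $G_\rho$ (Lemmas~\ref{lemma:Gi_smooth} and~\ref{lemma:Gi_convex}), exists and satisfies $\nabla G_\rho(\mathbf{x}_\rho^*)=0$, as the earlier lemmas already assume.
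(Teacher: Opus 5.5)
Your proposal is correct and matches the paper's proof step for step. You expand the averaged recursion, use conditional unbiasedness to replace the stochastic gradients by $\nabla G_{i,\rho}(\mathbf{x}_{i,k})$ in the cross term, bound that term with Lemma~\ref{Lemma: bound for inner product term} and the quadratic term with Lemma~\ref{Lemma: the bound for nabla G_rho(mathbf{x}_k)}, and absorb coefficients via $\gamma\le 1/(6L_{G_\rho})$; your explicit remark that $G_\rho(\bar{\mathbf{x}}_k)-G_\rho(\mathbf{x}_\rho^*)\ge 0$ supplies the justification the paper leaves implicit when it replaces $-2\gamma(1-3\gamma L_{G_\rho})$ by $-\gamma$.
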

\begin{proof}
By the compact update rule~\eqref{equation: compact update rule}, the averaged iterate always satisfies $\bar{\mathbf{x}}_{k+1}=\bar{\mathbf{x}}_k-\gamma\frac{1}{m}\sum_{i=1}^m\nabla g_{i,\rho}(\mathbf{x}_{i,k},\xi_{i,k})$. Therefore, we obtain
\begin{align*}
&\|\bar{\mathbf{x}}_{k+1}-\mathbf{x}_\rho^*\|^2=\gamma^2\|\tfrac{1}{m}\textstyle\sum_{i=1}^m\nabla g_{i,\rho}(\mathbf{x}_{i,k},\xi_{i,k})\|^2+ \|\bar{\mathbf{x}}_k-\mathbf{x}_\rho^*\|^2-\tfrac{2\gamma}{m}\textstyle\sum_{i=1}^m\left\langle\bar{\mathbf{x}}_k-\mathbf{x}_\rho^*,\nabla g_{i,\rho}(\mathbf{x}_{i,k},\xi_{i,k})\right\rangle.
\end{align*}
Taking conditional expectations on both sides, using the fact that $\xi_{i,0},\ldots,\xi_{i,k}$ are independent samples for all $i=1,\ldots,m$, that $\bar{\mathbf{x}}_k$ and $\mathbf{x}_\rho^*$ are $\mathcal{F}_k$-measurable, and invoking Lemma~\ref{Lemma: bound for inner product term}, we obtain
\begin{align*}
\mathbb{E}  \left[\|\bar{\mathbf{x}}_{k+1} - \mathbf{x}_\rho^*\|^2  | \mathcal{F}_k\right]  &\le   \|\bar{\mathbf{x}}_k - \mathbf{x}_\rho^*\|^2   -  2\gamma\left(G_{\rho}(\bar{\mathbf{x}}_{k}) - G_{\rho}(\mathbf{x}_\rho^{*})\right)\\
& +\gamma^2\mathbb{E}  \left[\| \tfrac{1}{m}  \textstyle\sum_{i=1}^m  \nabla g_{i,\rho}(\mathbf{x}_{i,k},\xi_{i,k})\|^2  \big | \mathcal{F}_k\right]   +  \tfrac{\gamma L_{G_\rho}}{m}  \textstyle\sum_{i=1}^{m}  \delta_{i,k}.
\end{align*}
Taking another expectation on both sides and invoking Lemma~\ref{Lemma: the bound for nabla G_rho(mathbf{x}_k)}, we obtain
\begin{align*}
\mathbb{E}\left[\|\bar{\mathbf{x}}_{k+1}-\mathbf{x}_\rho^*\|^2\right]&\le \mathbb{E}\left[\|\bar{\mathbf{x}}_k-\mathbf{x}_\rho^*\|^2\right]+{\gamma L_{G_\rho}}\left(1+3{\gamma L_{G_\rho}}\right)\mathbb{E}\left[\tfrac{1}{m}\textstyle\sum_{i=1}^{m}\delta_{i,k}\right]\\
&-2\gamma\left(1-{3\gamma L_{G_\rho}}\right)\mathbb{E}\left[G_{\rho}(\bar{\mathbf{x}}_{k})-G_{\rho}(\mathbf{x}_\rho^{*})\right]+\tfrac{3\gamma^2\nu^2}{m}.
\end{align*}
Based on the condition on $\gamma$, which is $\gamma \le \frac{1}{6L_{G_\rho}}$, we obtain that $-2\gamma\left(1 - 3\gamma L_{G_\rho}\right) \le -\gamma$ and $\gamma L_{G_\rho}\left(3\gamma L_{G_\rho} + 1\right) \le \frac{3}{2}\gamma L_{G_\rho}$. Invoking these bounds, we obtain the desired result.
\end{proof}

In the following proposition, we establish a complexity bound for the penalized objective. This bound is used to prove the main results for problem~\eqref{problem: main problem1} in Proposition~\ref{theorem: main theorem for the non-iid} and Theorem~\ref{theorem: best rho}.
\begin{proposition}\em\label{proposition: bounds for the penalized function}
Let $m \ge 2$ and consider Algorithm~\ref{algorithm: main algorithm-block-wise}. Assume that Assumptions~\ref{assumption: main assumption 1},~\ref{assumption: main assumption 3}, and~\ref{assumption: main assumption 2} are satisfied and that $\max_r |T_{r+1} - T_r| \le H$ for some $H \ge 1$. Let $\mathbf{x}_\rho^* \in \arg\min_{\mathbf{x}} G_\rho(\mathbf{x})$ and define \me{$M\triangleq\frac{1}{m}\sum_{i=1}^m\mathbb{E}\left[\|\nabla\tilde F_i(\mathbf x^*,\xi_i)\|^2\right]$}. Choose a stepsize $\gamma > 0$ such that $\gamma \le \min\left\{\frac{1}{6L_{G_\rho}},\,\frac{1}{5L_{G_\rho}(H-1)}\right\}$. Set $K \triangleq R H$ and let $K^*$ be a discrete uniform random variable on $\{0,\ldots,K-1\}$, i.e., $\mathbb{P}[K^* = \ell] = \frac{1}{K}$ for $\ell = 0,\ldots,K-1$.
Then, the following bound holds.
\begin{align*}
\mathbb{E}\left[G_{\rho}(\bar{\mathbf{x}}_{K^*})-G_{\rho}(\mathbf{x}_\rho^{*})\right]&\le  \tfrac{3\mathbb{E}\left[\|\bar{\mathbf{x}}_{0}-\mathbf{x}_\rho^*\|^2\right]}{\fy{(K\gamma)}}\\
&+\tfrac{45}{2} L_{G_\rho}\gamma^{2}(H-1)^{2}M+\frac{\nu^2}{m}\left(\tfrac{45}{2} L_{G_\rho}\gamma^{2}(H-1)^{2}+{9\gamma }\right).
\end{align*}
\end{proposition}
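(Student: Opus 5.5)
The plan is to telescope the one-step recursion of Lemma~\ref{lemma: [Optimality gap single recursion]} over all $K=RH$ iterations, and to absorb the accumulated consensus error with Lemma~\ref{lemma: lemma for bounding the consensus}, applied round by round.

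\textbf{Step 1: telescope.} Summing Lemma~\ref{lemma: [Optimality gap single recursion]} from $k=0$ to $K-1$ and dropping the nonnegative term $\mathbb{E}[\|\bar{\mathbf{x}}_K-\mathbf{x}_\rho^*\|^2]$ gives
\begin{align*}
\gamma\textstyle\sum_{k=0}^{K-1}\mathbb{E}\left[G_\rho(\bar{\mathbf{x}}_k)-G_\rho(\mathbf{x}_\rho^*)\right]\le \mathbb{E}\left[\|\bar{\mathbf{x}}_0-\mathbf{x}_\rho^*\|^2\right]+\tfrac{3}{2}\gamma L_{G_\rho}\textstyle\sum_{k=0}^{K-1}\mathbb{E}\left[\tfrac{1}{m}\sum_{i=1}^m\delta_{i,k}\right]+\tfrac{3K\gamma^2\nu^2}{m}.
\end{align*}

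\textbf{Step 2: bound the consensus error.} Split $\{0,\ldots,K-1\}$ into the communication rounds $[T_r,T_{r+1}-1]$. On each round, apply Lemma~\ref{lemma: lemma for bounding the consensus}. At $k=T_r$ all agents start from the same server broadcast, so $\delta_{i,T_r}=0$, and the lemma's argument applies to every round, including $r=0$. Summing over rounds gives
\begin{align*}
\textstyle\sum_{k}\mathbb{E}\left[\tfrac{1}{m}\sum_i\delta_{i,k}\right]\le 5\gamma^2(H-1)^2K\left(M+\tfrac{\nu^2}{m}\right)+10\gamma^2(H-1)^2L_{G_\rho}\textstyle\sum_k\mathbb{E}\left[G_\rho(\bar{\mathbf{x}}_k)-G_\rho(\mathbf{x}_\rho^*)\right].
\end{align*}

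\textbf{Step 3: absorb the gap term.} Substituting into Step 1, the suboptimality sum on the right carries the coefficient $15\gamma^3(H-1)^2L_{G_\rho}^2$. The stepsize condition $\gamma\le 1/(5L_{G_\rho}(H-1))$ gives $\gamma^2(H-1)^2L_{G_\rho}^2\le 1/25$, so this coefficient is at most $\tfrac{3}{5}\gamma$. Moving it to the left leaves a net factor of at least $\tfrac{2}{5}\gamma$ on $\sum_k\mathbb{E}[G_\rho(\bar{\mathbf{x}}_k)-G_\rho(\mathbf{x}_\rho^*)]$.

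\textbf{Step 4: normalize.} Divide by $K$ and use $\mathbb{E}[G_\rho(\bar{\mathbf{x}}_{K^*})-G_\rho(\mathbf{x}_\rho^*)]=\frac1K\sum_k\mathbb{E}[G_\rho(\bar{\mathbf{x}}_k)-G_\rho(\mathbf{x}_\rho^*)]$. Dividing by the net factor produces constants of the form $\tfrac{5}{2}$ (for the initial term), $\tfrac{5}{2}\cdot\tfrac{15}{2}$ (for the $M$ and $\nu^2$ consensus terms), and $\tfrac{5}{2}\cdot 3$ (for the $\gamma\nu^2/m$ term). The stated constants $3$, $\tfrac{45}{2}$ and $9$ correspond to a slightly looser absorption, net factor $\tfrac{\gamma}{3}$ rather than $\tfrac{2\gamma}{5}$, under which the three coefficients become exactly $3$, $\tfrac{45}{2}$ and $9$. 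Since the bound only needs to hold, I would present it with that weaker factor, or simply note that the tighter constants imply the stated ones.

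The main obstacle is bookkeeping in Step 2. I need to confirm that the consensus bound applies to the first round, and that $M$, which is defined at $\mathbf{x}^*$, matches how $\nabla G_{i,\rho}(\mathbf{x}_\rho^*)$ enters Lemma~\ref{lemma: lemma for bounding the consensus}. Since that lemma is taken as given, I will use it exactly as stated.
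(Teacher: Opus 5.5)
Your proposal is correct and is essentially the paper's proof: sum Lemma~\ref{lemma: [Optimality gap single recursion]} over $k=0,\ldots,K-1$, then bound the accumulated consensus error round by round with Lemma~\ref{lemma: lemma for bounding the consensus}, and absorb the resulting gap term using $15\gamma^2(H-1)^2L_{G_\rho}^2-1\le-\tfrac{1}{3}$. Like you, the paper also applies that lemma to the first round $[T_0,T_1-1]$ and implicitly assumes $K$ is a synchronization index; dividing by $K\gamma/3$ gives exactly the constants $3$, $\tfrac{45}{2}$, $9$, and your sharper net factor $\tfrac{2}{5}\gamma$ gives smaller constants that imply the stated bound.
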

\begin{proof}
As we mentioned, let $K \triangleq HR$. Summing both sides of Lemma~\ref{lemma: [Optimality gap single recursion]} over $k = 0, \ldots, K-1$, we obtain
\begin{align}
\textstyle\sum_{k=0}^{K-1}\mathbb{E}\left[\|\bar{\mathbf{x}}_{k+1}-\mathbf{x}_\rho^*\|^2\right]&\le \textstyle\sum_{k=0}^{K-1}\mathbb{E}\left[\|\bar{\mathbf{x}}_{k}-\mathbf{x}_\rho^*\|^2\right]+\gamma\left( \tfrac{3}{2} L_{G_\rho}\textstyle\sum_{k=0}^{K-1}\mathbb{E}\left[\tfrac{1}{m}\sum_{i=1}^m\delta_{i,k}\right]\right.\notag\\
&\left.-\textstyle\sum_{k=0}^{K-1}\mathbb{E}\left[G_{\rho}(\bar{\mathbf{x}}_{k})-G_{\rho}(\mathbf{x}_\rho^{*})\right]\right)+ \tfrac{3K\gamma^2 \nu^2}{m}.\label{equation: the first one for the proposition}
\end{align}
Since $K=T_r$ for some $r\in\mathbb{N}$, applying Lemma~\ref{lemma: lemma for bounding the consensus} to the second term, we obtain
\begin{align*}
& \tfrac{3}{2} L_{G_\rho}\textstyle\sum_{t=1}^{r}\sum_{k=T_{t-1}}^{T_{t}-1}\mathbb{E}\left[\tfrac{1}{m}\sum_{i=1}^{m}\delta_{i,k}\right]-\textstyle\sum_{t=1}^{r}\sum_{k=T_{t-1}}^{T_{t}-1}\mathbb{E}\left[G_{\rho}(\bar{\mathbf{x}}_{k})-G_{\rho}(\mathbf{x}_\rho^{*})\right]\\&\me{\le\textstyle\sum_{t=1}^{r}\sum_{k=T_{t-1}}^{T_{t}-1}}\left(15\gamma^{2}(H-1)^{2}L_{G_\rho}^2-1\right)\mathbb{E}\left[\left(G_{\rho}(\bar{\mathbf{x}}_t)- G_{\rho}({\mathbf{x}}_\rho^*)\right)\right]\\
&+\textstyle\sum_{t=1}^{r}\sum_{k=T_{t-1}}^{T_{t}-1}\frac{15}{2} L_{G_\rho}\gamma^{2}(H-1)^{2}\left(M+\tfrac{\nu^2}{m}\right).
\end{align*}
By our choice of $\gamma$, we have $\gamma\le \tfrac{1}{5L_{G_\rho}(H-1)}$, implies that $15\gamma^{2}(H-1)^{2}L_{G_\rho}^2-1\le -\frac{1}{3}$. Substituting this bound into the preceding inequality, then into~\eqref{equation: the first one for the proposition}, and rearranging terms, we obtain
\begin{align*}
\tfrac{\gamma}{3}\textstyle\sum_{k=0}^{K-1}\mathbb{E}\left[G_{\rho}(\bar{\mathbf{x}}_{k})-G_{\rho}(\mathbf{x}_\rho^{*})\right]&\le\mathbb{E}\left[\|\bar{\mathbf{x}}_{0}-\mathbf{x}_\rho^*\|^2\right]+K(\tfrac{15}{2} L_{G_\rho}\gamma^{3}(H-1)^{2}M)\\
& +K(\tfrac{\nu^2}{m}(\tfrac{15}{2} L_{G_\rho}\gamma^{3}(H-1)^{2}+{3\gamma^2 })).
\end{align*}
Dividing both sides by $\frac{K\gamma}{3}$, we obtain the desired result.

\end{proof}

\me{The next proposition establishes upper and lower bounds on the global objective’s suboptimality, together with \mee{agent}-wise infeasibility bounds \fy{in addressing problem~\eqref{problem: main problem1}}. These bounds \fy{are} used to derive the communication complexity guarantees in Theorem~\ref{theorem: best rho}.}
\begin{proposition}\em\label{theorem: main theorem for the non-iid} Let Assumptions~\ref{assumption: main assumption 1},~\ref{assumption: main assumption 3}, and~\ref{assumption: main assumption 2} hold. Assume that \me{$\mathbf{x}^*\triangleq [x_1^*,\ldots,x_m^*]$} is the optimal solution of problem~\eqref{problem: main problem1} and $\rho>0$ is the penalty parameter. Suppose that $\max_r |T_{r+1}-T_r|\le H$ and \me{$M\triangleq\frac{1}{m}\sum_{i=1}^m\mathbb{E}[\|\nabla\tilde F_i(\mathbf x^*,\xi_i)\|^2]$}. Choose a stepsize $\gamma > 0$ such that $\gamma \le \min\left\{\frac{1}{6L_{G_\rho}},\,\frac{1}{5L_{G_\rho}(H-1)}\right\}$. Set $K \triangleq R H$ and let $K^*$ be a discrete uniform random variable on $\{0,\ldots,K-1\}$, i.e., $\mathbb{P}[K^* = \ell] = \frac{1}{K}$ for $\ell = 0,\ldots,K-1$. Then, the following hold

\noindent(i) \textbf{[Suboptimality-Upper Bound]} The expected optimality gap of the randomly selected averaged iterate \me{$\mathbf{x}_{K^*}$} satisfies
\begin{align*}\mathbb{E}\left[ f(\me{\bar{\mathbf{x}}_{K^*}})-f(\me{\mathbf{x}^*})\right]&\le\!\tfrac{45}{2}\rho\gamma^{2}(H\!-\!1)^{2}\!\left(\!M\!+\!\tfrac{\nu^2}{m}\right)\!+\!\tfrac{9\gamma\nu^2}{m}\\
&+\tfrac{3\mathbb{E}\left[\left\|\bar{\mathbf{x}}_{0}-\mathbf{x}_\rho^*\right\|^2\right]}{K\gamma}+\me{\tfrac{45(L_f+\sigma(m-1))}{2m}}\gamma^{2}(H-1)^{2}\left(M+\tfrac{\nu^2}{m}\right).
\end{align*}
\noindent(ii) \textbf{[Feasibility]} Moreover, for each $i \in \{1,\ldots,m\}$, the expected squared feasibility violation satisfies
\me{\begin{align*}
 \mathbb{E}\!\left[\left\| \bar x_{i,K^*} \!- \!\Pi_{X_i}\!\left[ \bar x_{i,K^*}\right]\right\|^2\right]&\le\tfrac{5m\|\nabla f(\mathbf{x}^*)\|^2}{\rho^2}\!+\!{{\tfrac{60m\mathbb{E}\left[\left\|\bar{\mathbf{x}}_{0}-\mathbf{x}_\rho^*\right\|^2\right]}{K\gamma\rho}}{}}+\!{\tfrac{{450}{} (L_f+\sigma(m-1))\gamma^{2}(H\!-\!1)^{2}\left(M+\frac{\nu^2}{m}\right)}{\rho}}\!\\
&+\!{{{450m}{}\gamma^{2}(H\!-\!1)^{2}\left(M+\tfrac{\nu^2}{m}\right)}{}}\!+\!{\tfrac{{180\gamma\nu^2}{}}{\rho}}.
\end{align*}}
\noindent(iii) \textbf{[Suboptimality-Lower Bound]} The expected lower bound for the optimality gap satisfies\me{\begin{align*}&-\left\|\nabla f(\bar{x}^*)\right\|\left(\tfrac{m\|\nabla f(\mathbf{x}^*)\|}{\rho}+6\sqrt{\tfrac{{\gamma\nu^2}{}}{\rho}}+2\sqrt{{\tfrac{3m\mathbb{E}\left[\left\|\bar{\mathbf{x}}_{0}-\mathbf{x}_\rho^*\right\|^2\right]}{K\gamma\rho}}{}}\right.\\
&\left.+2\sqrt{\tfrac{{45}{}(L_f+\sigma(m-1))\gamma^{2}(H-1)^{2}\left(M+\frac{\nu^2}{m}\right)}{2\rho}}\right.\left.+2\sqrt{\tfrac{{45}{}m\gamma^{2}(H-1)^{2}\left(M+\frac{\nu^2}{m}\right)}{2}}\right)\leq\mathbb{E}\left[ f(\me{\bar{\mathbf{x}}_{K^*}})-f(\me{\mathbf{x}^*})\right].
\end{align*}}
\end{proposition}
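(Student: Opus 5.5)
The plan is to transfer the penalized-objective bound of Proposition~\ref{proposition: bounds for the penalized function} to the original objective $f$ and to the per-agent distances, using only the structure $G_\rho = f + \rho H$. Write $\mathcal{E}$ for the right-hand side of Proposition~\ref{proposition: bounds for the penalized function}. Since $L_{G_\rho}=\frac{L_f+\sigma(m-1)}{m}+\rho$, the term $\frac{45}{2}L_{G_\rho}\gamma^2(H-1)^2(M+\nu^2/m)$ splits exactly into the $\rho$-part and the $\frac{45(L_f+\sigma(m-1))}{2m}$-part appearing in (i).

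\textbf{Part (i).} First note that $\mathbf{x}^*$ is feasible, so $H(\mathbf{x}^*)=0$ and $G_\rho(\mathbf{x}_\rho^*)\le G_\rho(\mathbf{x}^*)=f(\mathbf{x}^*)$. Also $H\ge 0$, so $f(\bar{\mathbf{x}}_{K^*})\le G_\rho(\bar{\mathbf{x}}_{K^*})$. Hence
\[
\mathbb{E}[f(\bar{\mathbf{x}}_{K^*})-f(\mathbf{x}^*)]\le \mathbb{E}[G_\rho(\bar{\mathbf{x}}_{K^*})-G_\rho(\mathbf{x}_\rho^*)]\le\mathcal{E},
\]
and the decomposition of $L_{G_\rho}$ above gives (i) directly.

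\textbf{Part (ii).} The block $i$ of $\bar{\mathbf{x}}_{K^*}$ is $\bar x_{i,K^*}$, so $\frac{\rho}{2m}\bar d_i^2\le \rho H(\bar{\mathbf{x}}_{K^*})$. Start from
\[
\rho H(\bar{\mathbf{x}}_{K^*}) = G_\rho(\bar{\mathbf{x}}_{K^*})-G_\rho(\mathbf{x}_\rho^*) + G_\rho(\mathbf{x}_\rho^*)-f(\mathbf{x}^*) + f(\mathbf{x}^*)-f(\bar{\mathbf{x}}_{K^*}).
\]
The middle difference is $\le 0$. For the last difference, use convexity of $f$ (it is convex in $\mathbf{x}$ since $f_i$ is convex and $A$ is linear) together with optimality of $\mathbf{x}^*$ over $X=\prod_i X_i$. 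This gives
\[
f(\bar{\mathbf{x}}_{K^*})-f(\mathbf{x}^*)\ge\langle\nabla f(\mathbf{x}^*),\bar{\mathbf{x}}_{K^*}-\Pi_X[\bar{\mathbf{x}}_{K^*}]\rangle\ge-\|\nabla f(\mathbf{x}^*)\|\,\bar d .
\]
Here $\bar d^2=\sum_i\bar d_i^2 = 2mH(\bar{\mathbf{x}}_{K^*})$. We therefore get $\rho\bar d^2/(2m)\le \mathcal{G}+\|\nabla f(\mathbf{x}^*)\|\bar d$, with $\mathcal{G}$ the penalized gap. Apply Young's inequality $\|\nabla f\|\bar d\le \frac{\rho\bar d^2}{4m}+\frac{m\|\nabla f\|^2}{\rho}$, take expectations and invoke Proposition~\ref{proposition: bounds for the penalized function}. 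This yields $\mathbb{E}[\bar d_i^2]\le\mathbb{E}[\bar d^2]\le \frac{c\, m^2\|\nabla f\|^2}{\rho^2}+\frac{c\, m}{\rho}\mathcal{E}$.

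Matching the stated constants (the $5m/\rho^2$ and $60m/(K\gamma\rho)$ factors) will need slightly different bookkeeping. The likely fix is to bound $\bar d_i^2$ via its own block, i.e. $\frac{\rho}{2}\bar d_i^2\le m\rho H$, rather than through $\bar d^2$. Otherwise I will accept constant discrepancies and only track the order. The term $450m\gamma^2(H-1)^2(M+\nu^2/m)$ without a $1/\rho$ factor comes from the $\rho$-part of $\mathcal{E}$ being multiplied by $m/\rho$.

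\textbf{Part (iii)} uses the same convexity inequality, $\mathbb{E}[f(\bar{\mathbf{x}}_{K^*})-f(\mathbf{x}^*)]\ge-\|\nabla f(\mathbf{x}^*)\|\,\mathbb{E}[\bar d]$. Then apply Jensen, $\mathbb{E}\bar d\le\sqrt{\mathbb{E}\bar d^2}$, and plug in the part (ii) bound. Finally use $\sqrt{a+b+\cdots}\le\sqrt a+\sqrt b+\cdots$ to split the root into the listed terms, with $\sqrt{m^2\|\nabla f\|^2/\rho^2}=m\|\nabla f\|/\rho$.

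The main obstacle is Part (ii). The difficulty is controlling the sign-indefinite term $f(\mathbf{x}^*)-f(\bar{\mathbf{x}}_{K^*})$, which requires the first-order optimality of $\mathbf{x}^*$ over the product set and a careful Young split so that feasibility is recovered at rate $1/\rho^2+\mathcal{E}/\rho$. Getting the exact constants to match the statement is secondary.
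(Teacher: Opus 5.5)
Your argument follows the paper's proof. Part (i) is identical: $G_\rho(\mathbf{x}_\rho^*)\le G_\rho(\mathbf{x}^*)=f(\mathbf{x}^*)$, drop $\rho H\ge 0$, split $L_{G_\rho}$. Parts (ii)--(iii) use the same ingredients: convexity of $f$, first-order optimality of $\mathbf{x}^*$ over $X=\prod_i X_i$, Cauchy--Schwarz, and a quadratic inequality in $\bar d$. You resolve the quadratic pathwise by Young's inequality and then use $\mathbb{E}[\bar d]\le\sqrt{\mathbb{E}[\bar d^2]}$; the paper takes expectations first and uses the quadratic formula. Your identity $\rho H(\bar{\mathbf{x}}_{K^*})=\frac{\rho}{2m}\sum_i\bar d_i^2$ is more accurate than the paper's $\frac{\rho}{m}\sum_i\bar d_i^2$. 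What needs correcting is your treatment of the $\|\nabla f(\mathbf{x}^*)\|$ terms in (ii) and (iii).

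In (ii) you obtain $\mathbb{E}[\bar d_i^2]\le \frac{4m^2\|\nabla f(\mathbf{x}^*)\|^2}{\rho^2}+\frac{4m}{\rho}\mathcal{E}$. Expanding $\frac{4m}{\rho}\mathcal{E}$ gives exactly one fifth of each of the last four stated terms (coefficients $12m, 90, 90m, 36$ against $60m, 450, 450m, 180$), so nothing needs matching there. The first term, however, cannot be brought down to $5m\|\nabla f(\mathbf{x}^*)\|^2/\rho^2$ by any bookkeeping; the own-block bound $\frac{\rho}{2}\bar d_i^2\le m\rho H$ is no sharper. The inequality is false as written for $m\ge 6$:
take $n=1$, $f_i(y)=-cy$, $\sigma_i\equiv\sigma$, $X_1=(-\infty,0]$, $X_j=\mathbb{R}$ for $j\ge 2$, $\nu=0$, $H=1$.
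Then $\nabla f(\mathbf{x}^*)=(-c,0,\ldots,0)$ and $\mathbf{x}_\rho^*=\mathbf{x}^*+\frac{mc}{\rho}\mathbf{1}$. Under (CR) the method is gradient descent on $G_\rho$, so $\mathbb{E}[\bar d_1^2]\to m^2c^2/\rho^2=m^2\|\nabla f(\mathbf{x}^*)\|^2/\rho^2$ as $K\to\infty$, while every other term on the right vanishes. The paper's own last step squares a five-term sum whose first term is $\frac{m\|\nabla f(\mathbf{x}^*)\|}{\rho}$, so it really proves $5m^2$. Keep your $4m^2$; it has the correct order.

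In (iii), your claim that the leading term is $\sqrt{m^2\|\nabla f(\mathbf{x}^*)\|^2/\rho^2}$ does not follow from your own bound. With $4m^2$ under the root you get $\frac{2m\|\nabla f(\mathbf{x}^*)\|}{\rho}$, although the other four square roots then coincide exactly with the stated ones. The paper reaches $m$ only through the dropped $\frac12$; with the correct weight $\frac{\rho}{2m}$ its quadratic-formula step also gives $2m$.

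The stated constant is nonetheless true, but it needs one more idea: compare $G_\rho(\mathbf{x}_\rho^*)$ with $G_\rho$ along $\mathbf{y}_t=\mathbf{x}^*+t(\bar{\mathbf{x}}_{K^*}-\mathbf{x}^*)$, $t\in[0,1]$.
\begin{itemize}
\item With $\Delta=f(\bar{\mathbf{x}}_{K^*})-f(\mathbf{x}^*)$, convexity of $f$ and of $\mathrm{dist}(\cdot,X)$ gives $G_\rho(\mathbf{y}_t)\le f(\mathbf{x}^*)+t\Delta+\frac{\rho}{2m}t^2\bar d^2$.
\item Hence $\mathcal{G}\ge(1-t)\Delta+\frac{\rho}{2m}(1-t^2)\bar d^2$ for every $t\in[0,1]$.
\item Optimize in $t$ and combine with $\Delta\ge-\|\nabla f(\mathbf{x}^*)\|\bar d$. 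This yields the pathwise bound $\Delta\ge-\|\nabla f(\mathbf{x}^*)\|\bigl(\frac{m\|\nabla f(\mathbf{x}^*)\|}{\rho}+\sqrt{2m\mathcal{G}/\rho}\bigr)$.
\item Jensen applied to $\sqrt{\mathcal{G}}$ then gives (iii) as stated, since $\sqrt{2}\le 2$.
\end{itemize}
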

\begin{proof}
\noindent(i) We know that $\mathbb{E}\left[G_\rho(\mathbf {x}_\rho^*)\right]\le\mathbb{E}\left[G_\rho(\mathbf {x}^*)\right]$, because $\mathbf {x}_\rho^*\triangleq \arg \min_{\mathbf x} G_\rho(\mathbf{x})$. Therefore, we obtain 
\[\mathbb{E}\left[G_\rho(\bar{\mathbf x}_{K^*})- G_\rho(\mathbf{x}^*)\right] \le\mathbb{E}\left[G_\rho(\bar{\mathbf x}_{K^*})- G_\rho(\mathbf{x}_\rho^*)\right].\] 
Invoking the definition of $G_\rho(\mathbf{x})$, we obtain
\begin{align*}&\mathbb{E}\left[ f(\bar{\mathbf x}_{K^*})\right]+\rho H(\bar{\mathbf x}_{K^*})-\left(\mathbb{E}\left[ f(\mathbf{x}^*)\right]+\rho H(\mathbf{x}^*)\right) \le\mathbb{E}\left[G_\rho(\bar{\mathbf x}_{K^*})- G_\rho(\mathbf{x}_\rho^*)\right].
\end{align*}
Invoking the definition of $H(\mathbf{x})$, Proposition~\ref{proposition: bounds for the penalized function}, and letting $E$ denote the resulting bound on $\mathbb{E}\left[G_\rho(\bar{\mathbf x}_{K^*})-G_\rho(\mathbf{x}_\rho^*)\right]$, we obtain
\me{\begin{align}
&\mathbb{E}\left[ f(\bar{\mathbf x}_{K^*})\right]+ \tfrac{\rho}{m}\textstyle\sum_{i=1}^m {\bar{d}}_i^2-\left(\mathbb{E}\left[ f(\mathbf{x}^*)\right]+ \tfrac{\rho}{m}\textstyle\sum_{i=1}^m \left\|x^*_{i} - \Pi_{X_i}[x^*_{i}]\right\|^2\right)\le E\notag\end{align}}We know that $x_i^*=\Pi_{X_i}[x_i^*]$, which implies that $\left\|x_i^*-\Pi_{X_i}[x_i^*]\right\|^2=0$, for all $i$. We obtain\me{\begin{align}
\mathbb{E}\!\left[ f(\bar{\mathbf x}_{K^*})\!\!-\!\!f(\mathbf{x}^*)\right]\!+ \tfrac{\rho}{m}\!\textstyle\sum_{i=1}^m {\bar{d}}_i^2 \!\le E.\label{equation: the bound in the proof of the theorem for personalized one}
\end{align}}
We know that \me{$ \frac{\rho}{m}\sum_{i=1}^m {\bar{d}}_i^2 \ge 0$}, thus \me{$\mathbb{E}\!\left[ f(\bar{\mathbf x}_{K^*})-f(\mathbf{x}^*)\right]\! \le \!E$}. We should note that in the definition of $L_{G_\rho}$ we have $\rho$, therefore, invoking the definition of $L_{G_\rho}$ and substituting the expression of $E$, we obtain \me{the desired result.}

\noindent(ii) From {the} convexity of \me{$f$}, we may write 
\me{\begin{align}\nabla f(\mathbf{x}^*)^\top\mathbb{E}[\bar{\mathbf x}_{K^*}-\mathbf{x}^*]\leq\mathbb{E}\!\left[ f(\bar{\mathbf x}_{K^*})-f(\mathbf{x}^*)\right].\end{align}}
Note that \me{$\nabla f(\mathbf{x}^*)^\top\mathbb{E}[\bar{\mathbf x}_{K^*}-\mathbf{x}^*]$} is not necessarily nonnegative because problem~\eqref{problem: main problem1} is a constrained problem. Adding and subtracting \me{$ \Pi_{X}[\bar{\mathbf x}_{K^*}]$, where $X\triangleq \prod_{i=1}^m X_i$}, we obtain
\me{\begin{align*}
&\nabla f(\mathbf{x}^*)^\top\mathbb{E}\left[\bar{\mathbf x}_{K^*}-\mathbf{x}^*+ \Pi_{X}[\bar{\mathbf x}_{K^*}]- \Pi_{X}[\bar{\mathbf x}_{K^*}]\right]\le\mathbb{E}\!\left[ f(\bar{\mathbf x}_{K^*})-f(\mathbf{x}^*)\right].
\end{align*}}
In view of \me{$\Pi_{X}[\bar{\mathbf x}_{K^*}] \in X$}, we have \me{$\nabla f(\mathbf{x}^*)^\top\mathbb{E}\left[ \Pi_{X}[\bar{\mathbf x}_{K^*}]-\mathbf{x}^*\right]\geq 0$}. Thus
\me{$$\nabla f(\mathbf{x}^*)^\top\!\mathbb{E}\left[\bar{\mathbf x}_{K^*}\!\!-\!\Pi_{X}[\bar{\mathbf x}_{K^*}\!]\right]\!\leq\!\mathbb{E}\!\left[ f(\bar{\mathbf x}_{K^*}\!)\!-\!f(\mathbf{x}^*)\right].$$}
%
%
By the Cauchy-Schwarz inequality and Jensen's inequality, we obtain
$
-\|\nabla f(\mathbf{x}^*)\|\mathbb{E}\left[{\bar d} \ \right]\!\le\!\mathbb{E}\!\left[ f(\bar{\mathbf x}_{K^*}\!)\!-\!f(\mathbf{x}^*)\right]$.
Substituting the preceding lower bound into~\eqref{equation: the bound in the proof of the theorem for personalized one}, we obtain
\begin{align}
&-\|\nabla f(\mathbf{x}^*)\|\mathbb{E}\left[\bar d\ \right]+\tfrac{\rho }{m}\textstyle\sum_{i=1}^m \bar d_i^2 \le E.\label{equation: the equation used in the next part for the lower bound}
\end{align}
Because $-\textstyle\sum_{i=1}^m \mathbb{E}\left[\bar d_i\right]\le-\mathbb{E}\left[\bar d\ \right]$, taking expectations on both sides and using the fact that $-\sqrt{\textstyle\sum_{i=1}^m \mathbb{E}\left[\bar d_i^2\right]}\le-\textstyle\sum_{i=1}^m \mathbb{E}\left[\bar d_i\right]$, yields
\me{\begin{align}
& \tfrac{\rho }{m}\textstyle\sum_{i=1}^m \mathbb{E}\left[\bar d_i^2\right]-\|\nabla f(\mathbf{x}^*)\|\sqrt{\textstyle\sum_{i=1}^m \mathbb{E}\left[\bar d_i^2\right]}\le E. \label{equation: quadratic equation bound}
\end{align}}
\fy{Viewing the left-hand side of the} preceding inequality \fy{as a} quadratic \fy{function} in $a$, where \me{$a \triangleq \sqrt{\textstyle\sum_{i=1}^m \mathbb{E}\!\left[\bar d_i^2\right]}$}, \fy{we may write}
\me{\begin{align*}
&\sqrt{\textstyle\sum_{i=1}^m \mathbb{E}\!\left[\bar d_i^2\right]}\!\le\!\! \tfrac{m\|\nabla f(\mathbf{x}^*)\|}{\rho}\!+\!2\sqrt{\tfrac{mE}{\rho}}.
\end{align*}}
Invoking the definition of $E$, we obtain
\begin{align}
\sqrt{\textstyle\sum_{i=1}^m \mathbb{E}\!\left[\bar d_i^2\right]}
&\le \tfrac{m\|\nabla f(\mathbf{x}^*)\|}{\rho}+6\sqrt{\tfrac{{\gamma\nu^2}{}}{\rho}}+2\sqrt{\tfrac{{45}{}(L_f+\sigma(m-1))\gamma^{2}(H-1)^{2}\left(M+\frac{\nu^2}{m}\right)}{2\rho}}\notag\\
&+2\sqrt{{\tfrac{3m\mathbb{E}\left[\left\|\bar{\mathbf{x}}_{0}-\mathbf{x}_\rho^*\right\|^2\right]}{K\gamma\rho}}{}}+2\sqrt{\tfrac{{45}{}m\gamma^{2}(H-1)^{2}\left(M+\frac{\nu^2}{m}\right)}{2}}.\label{equation: the bound used in the lower bound}
\end{align}
\me{Since $\sqrt{ \mathbb{E}\!\left[\bar d_i^2\right]}\le\sqrt{\sum_{i=1}^m \mathbb{E}\!\left[\bar d_i^2\right]}$, for all $i$, \fy{we obtain}
\begin{align*}
 \mathbb{E}\left[\bar d_i^2\right]&\le \tfrac{5m\|\nabla f(\mathbf{x}^*)\|^2}{\rho^2}\!+\!{{\tfrac{60m\mathbb{E}\left[\left\|\bar{\mathbf{x}}_{0}-\mathbf{x}_\rho^*\right\|^2\right]}{K\gamma\rho}}{}}+\!{\tfrac{{450}{} (L_f+\sigma(m-1))\gamma^{2}(H\!-\!1)^{2}\left(M+\frac{\nu^2}{m}\right)}{\rho}}\!\\
&+\!{{{450m}{}\gamma^{2}(H\!-\!1)^{2}\left(M+\tfrac{\nu^2}{m}\right)}{}}\!+\!{\tfrac{{180\gamma\nu^2}{}}{\rho}}.
\end{align*}}
\noindent(iii) Invoking the facts that $-\textstyle\sum_{i=1}^m \mathbb{E}\left[\bar d_i\right]\le\!-\mathbb{E}\left[\bar d \ \right]$ and $-\sqrt{\textstyle\sum_{i=1}^m \mathbb{E}\left[\bar d_i^2\right]}\le-\textstyle\sum_{i=1}^m \mathbb{E}\left[\bar d_i\right]$, and~\eqref{equation: the equation used in the next part for the lower bound}, we obtain\begin{align*}&-{\|\nabla f(\mathbf{x}^*)\|}{}\sqrt{\textstyle\sum_{i=1}^m\mathbb{E}\left[\bar d_i^2\right]}\le\mathbb{E}\!\left[ f(\bar{\mathbf x}_{K^*}\!)\!-\!f(\mathbf{x}^*)\right].
\end{align*}
Invoking the bound in~\eqref{equation: the bound used in the lower bound}, we obtain the desired result.
\end{proof}


{\bf{Proof of Theorem~\ref{theorem: best rho}.}}
\noindent For the suboptimality upper-bound, we invoke part \noindent(i) of Proposition~\ref{theorem: main theorem for the non-iid}, setting the upper bound on
$\mathbb{E}\!\left[ f(\bar{\mathbf{x}}_{K^*}) - f(\mathbf{x}^*) \right]$ to be at most $\epsilon$, yielding
\begin{align*}&\!\tfrac{45}{2}\rho\gamma^{2}(H\!-\!1)^{2}\!\left(\!M\!+\!\tfrac{\nu^2}{m}\right)\!+\!\tfrac{9\gamma\nu^2}{m}+\tfrac{3\mathbb{E}\left[\left\|\bar{\mathbf{x}}_{0}-\mathbf{x}_\rho^*\right\|^2\right]}{K\gamma}+\me{\tfrac{45(L_f+\sigma(m-1))}{2m}}\gamma^{2}(H-1)^{2}\left(M+\tfrac{\nu^2}{m}\right)\le \epsilon.
\end{align*}
Substituting $K=RH$, using the chosen $\rho$, and defining $D \triangleq \|\nabla f(\mathbf{x}^*)\|^2$,
$Q \triangleq \|\bar{\mathbf{x}}_0-\mathbf{x}_\rho^*\|^2$, and
$M \triangleq \frac{1}{m}\sum_{i=1}^m \mathbb{E}\!\left[\left\|\nabla \tilde F_i(\mathbf{x}^*,\xi_i)\right\|^2\right]$,
then setting $\gamma = 1/\sqrt{R}$ and rearranging terms, yields the desired bound. Parts \noindent(ii) and \noindent(iii) follow by the same argument.

\section{Numerical Experiments}
We evaluate our method on the MNIST and CIFAR-10 datasets with $m=4$ \mee{agent}s using a $K=10$ class softmax regression model.
Let $\mathbf{W}=[\mathbf{w}_1,\ldots,\mathbf{w}_K]\in\mathbb{R}^{D\times K}$ denote the model parameters, where $D$ is the feature dimension. \mee{agent} $i$ possesses a local dataset $\mathcal{D}_i=\{(\phi_{i,n},y_{i,n})\}_{n=1}^{N_i}$, where $\phi_{i,n}\in\mathbb{R}^D$ is the feature vector and $y_{i,n}\in\{1,\ldots,K\}$ is the class label. The local loss function of \mee{agent} $i$ is defined as the softmax cross-entropy loss
\begin{align*}
f_i(\mathbf{W})=-\tfrac{1}{N_i}\textstyle\sum_{n=1}^{N_i}\log\left(\tfrac{\exp(\mathbf{w}_{y_{i,n}}^\top \phi_{i,n})}{\sum_{k=1}^{K}\exp(\mathbf{w}_k^\top \phi_{i,n})}\right).
\end{align*}
We maintain local model variables $\mathbf{W}_1,\ldots,\mathbf{W}_m$ and define their average $\bar{\mathbf{W}}=\frac{1}{m}\sum_{i=1}^{m}\mathbf{W}_i $. To encourage sparse models under heterogeneous data distributions, each \mee{agent} enforces a local $\ell_1$-norm constraint
$
X_i^{\mathrm{sp}}=\left\{\mathbf{W}\in\mathbb{R}^{D\times K}:\;\|\mathbf{W}\|_1\le \tau_i\right\},
$
where $\ell_1$ norm and $\tau_i>0$ may vary across \mee{agent}s. Thus, the sparsity experiment solves
\begin{align*}
\min_{\mathbf{W}_1,\ldots,\mathbf{W}_m}&\tfrac{1}{m}\textstyle\sum_{i=1}^{m}\left(f_i(\bar{\mathbf{W}})+\frac{\sigma_i}{2}\|\mathbf{W}_i-\bar{\mathbf{W}}\|^2\right)\\
&\text{s.t.}\quad
\mathbf{W}_i\in X_i^{\mathrm{sp}},\;\forall i.
\end{align*}
We compare the performance of our method with SCAFFOLD~\cite{karimireddy2020scaffold}, FedProx~\cite{li2020federated}, and FedAvg~\cite{mcmahan2017communication}. For these baselines, we consider a penalized formulation that encourages feasibility with respect to the local constraint sets:
\begin{align*}
\min_{\mathbf{W}\in\mathbb{R}^{D\times K}}\tfrac{1}{m}\textstyle\sum_{i=1}^{m}\left(f_i(\mathbf{W})+\frac{\rho}{2}\mathrm{dist}\bigl(\mathbf{W},X_i^{\mathrm{sp}}\bigr)^2\right).
\end{align*}
Accordingly, we refer to the resulting variants as Penalized-SCAFFOLD, Penalized-FedProx, and Penalized-FedAvg.

We use $\tau=[85,86,87,88]$ (CIFAR-10) and $\tau=[188,188,188,188]$ (MNIST). All methods run for $R=100$, each with $20$ local iterations and batch percentage $10\%$. PC-FedAvg uses step size $0.003$ (CIFAR-10) / $0.03$ (MNIST), \mee{agent}-wise $\sigma=[0.001,0.01,0.1,0.2]$ (CIFAR-10) / $\sigma=[0.01,0.02,0.03,0.04]$ (MNIST), and an increasing penalty $\rho=\sqrt[4]{r+10{,}000}$ at round $r$. FedAvg/FedProx use step size $0.001$ (CIFAR-10) / $0.01$ (MNIST), with FedProx parameter $\mu=2$ (CIFAR-10) / $\mu=0.1$ (MNIST). SCAFFOLD uses local/global step sizes $0.001/1$ (CIFAR-10) / $0.01/1$ (MNIST) and samples $2$ of $4$ \mee{agent}s per round. 
%

The results are shown in Figs.~\ref{fig: Global Comparison}--\ref{fig: Different Local Updates}.
Although we solve a constrained problem, our method achieves global objective values that are comparable to the baselines, especially on MNIST (Fig.~\ref{fig: Global Comparison}). 
In terms of infeasibility, however, our method consistently outperforms the baselines. As shown in Figs.~\ref{fig: Infeasibility-CIFAR-10} and~\ref{fig: Infeasibility-MNIST}, even when all methods are initialized from a feasible point, the baseline methods eventually drift toward infeasible iterates, whereas our iterates remain feasible throughout. We emphasize that, even when a penalty on the distance to each \mee{agent}'s feasible set is added to the objective, the baseline methods remain centered on learning a single shared model. This is because, at the beginning of each communication round, all \mee{agent}s are initialized from the same global model: the server overwrites local models with their average. In contrast, the block-wise structure of our method allows different blocks to remain distinct across \mee{agent}s over rounds. Specifically, the server updates only the blocks corresponding to each \mee{agent} rather than fully averaging all blocks. As a result, \mee{agent}s start each round from different model states, which is desirable for personalization. Finally, increasing the number of local updates improves the performance of our method (Fig.~\ref{fig: Different Local Updates}) for both CIFAR-10 and MNIST.
\begin{figure}[t]
\centering
\includegraphics[width=\columnwidth]{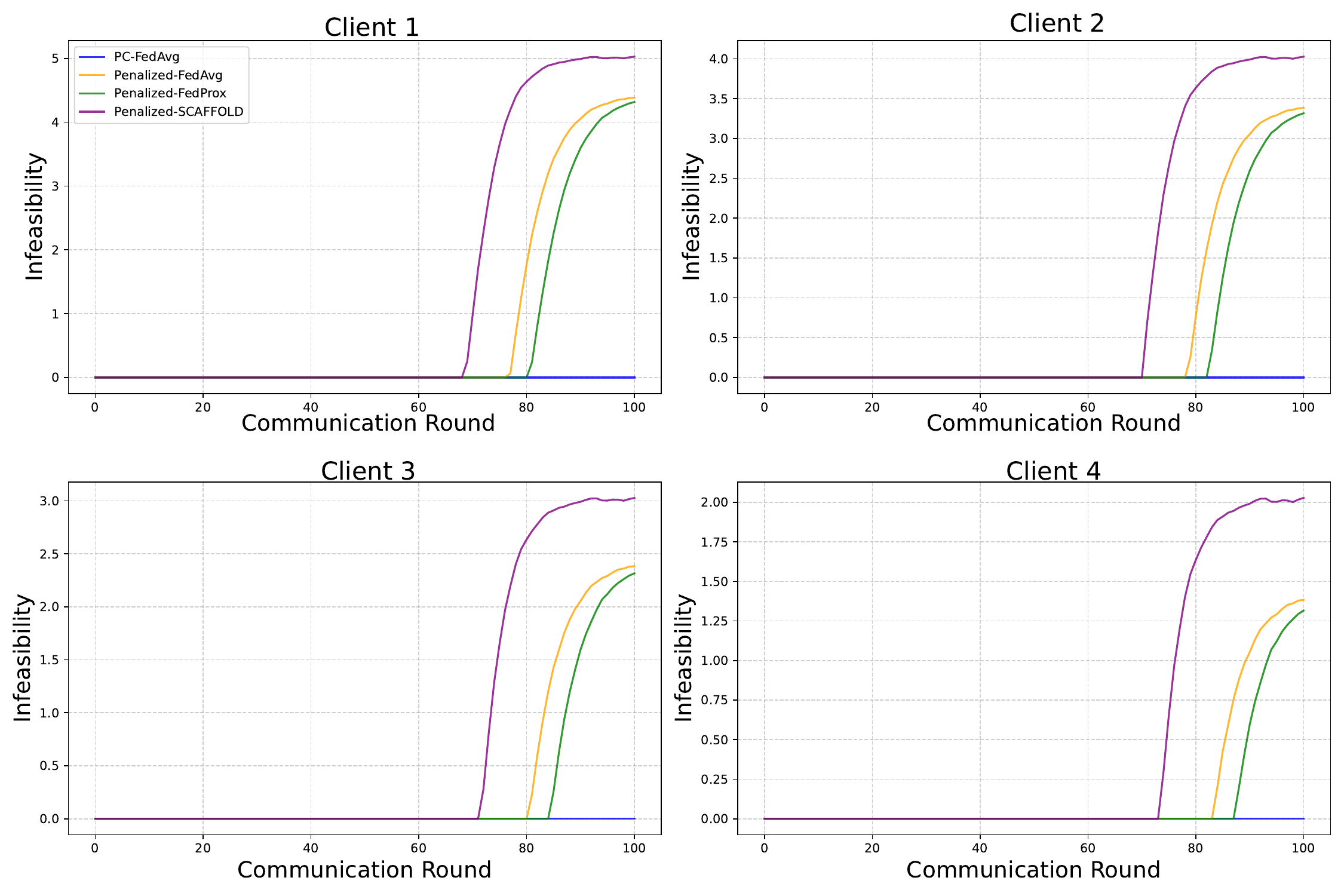}
\caption{Infeasibility across methods - CIFAR-10 dataset.}
\label{fig: Infeasibility-CIFAR-10}
\end{figure}

\begin{figure}[t]
\centering
\includegraphics[width=\columnwidth]{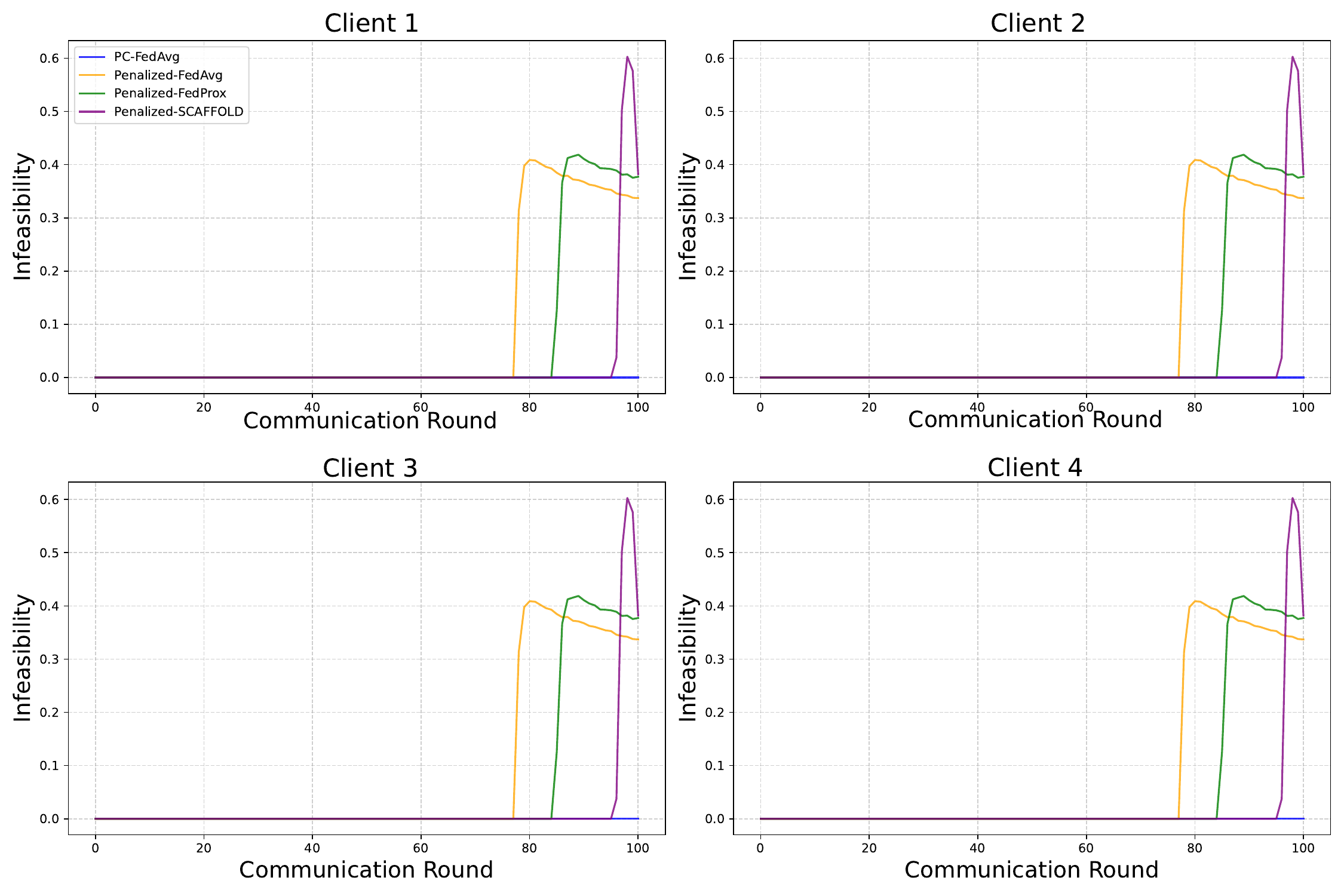}
\caption{Infeasibility across methods - MNIST dataset.}
\label{fig: Infeasibility-MNIST}
\end{figure}
\begin{figure}[t]
\centering

\begin{subfigure}{0.48\columnwidth}
\centering
\includegraphics[width=\linewidth]{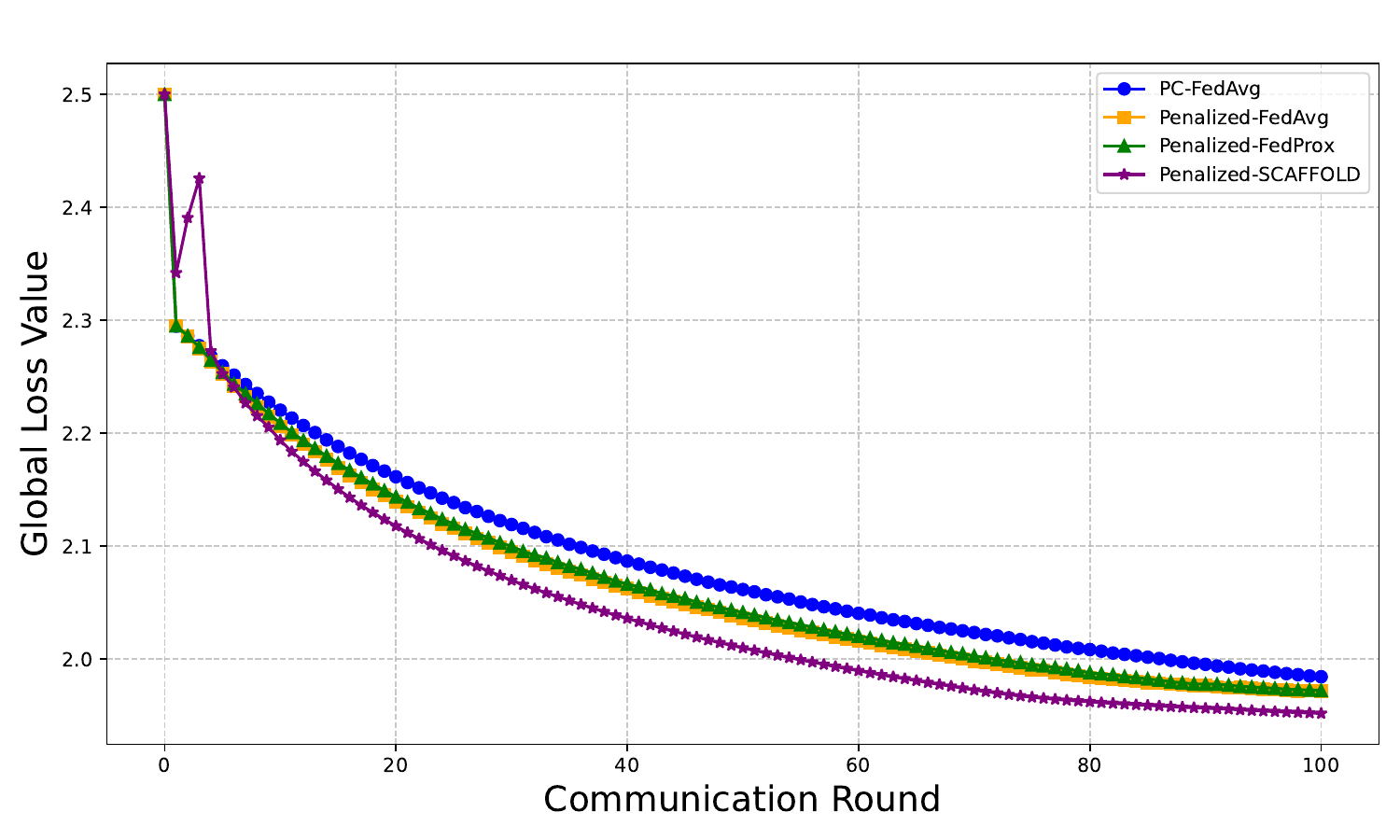}
\caption{CIFAR-10}
\label{fig:obj}
\end{subfigure}
\hfill
\begin{subfigure}{0.48\columnwidth}
\centering
\includegraphics[width=\linewidth]{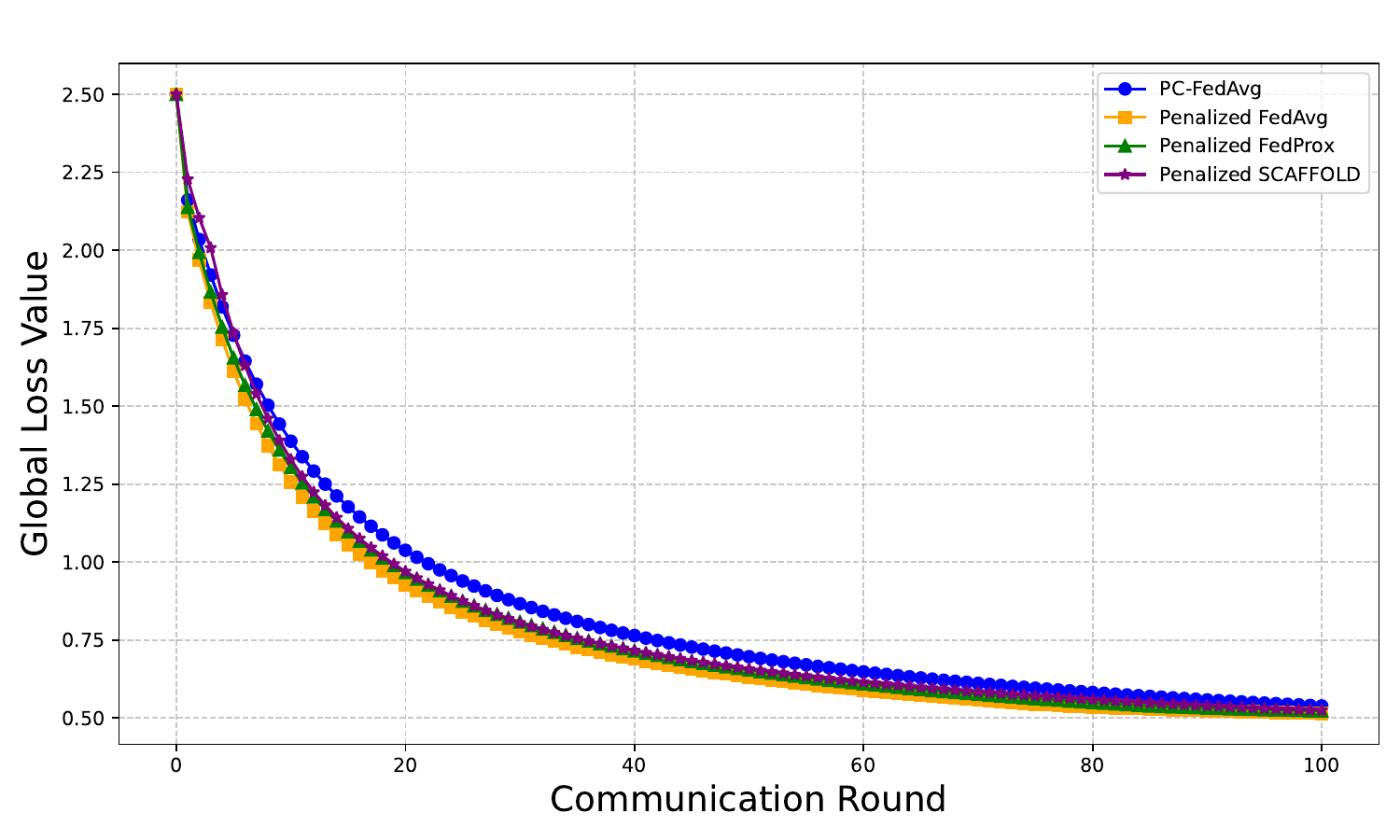}
\caption{MNIST}
\label{fig:inf}
\end{subfigure}

\caption{Comparison of global loss values across methods.}
\label{fig: Global Comparison}
\end{figure}

\begin{figure}[t]
\centering

\begin{subfigure}{0.48\columnwidth}
\centering
\includegraphics[width=\linewidth]{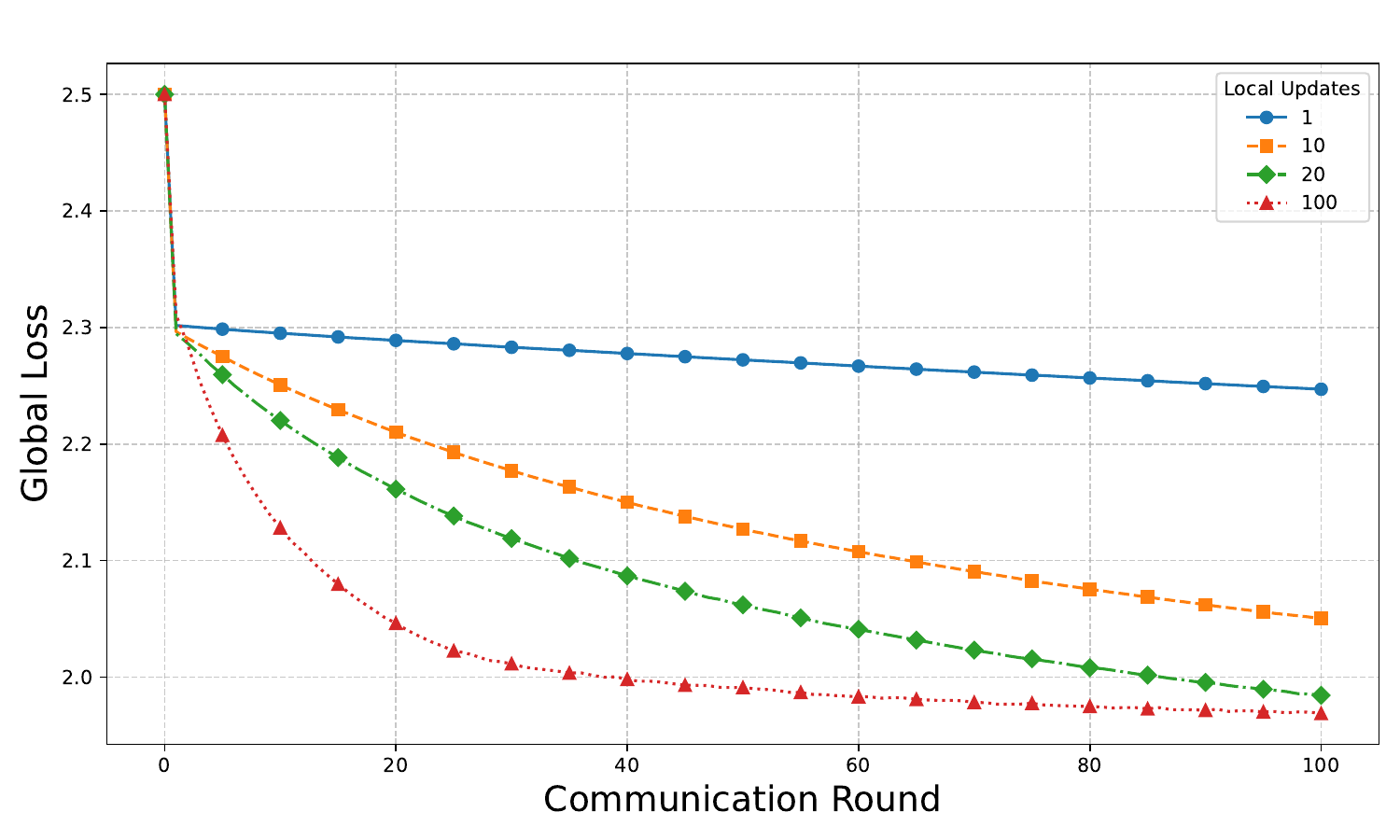}
\caption{CIFAR-10}
\label{fig: Different Local Updates-CIFAR-10}
\end{subfigure}
\hfill
\begin{subfigure}{0.48\columnwidth}
\centering
\includegraphics[width=\linewidth]{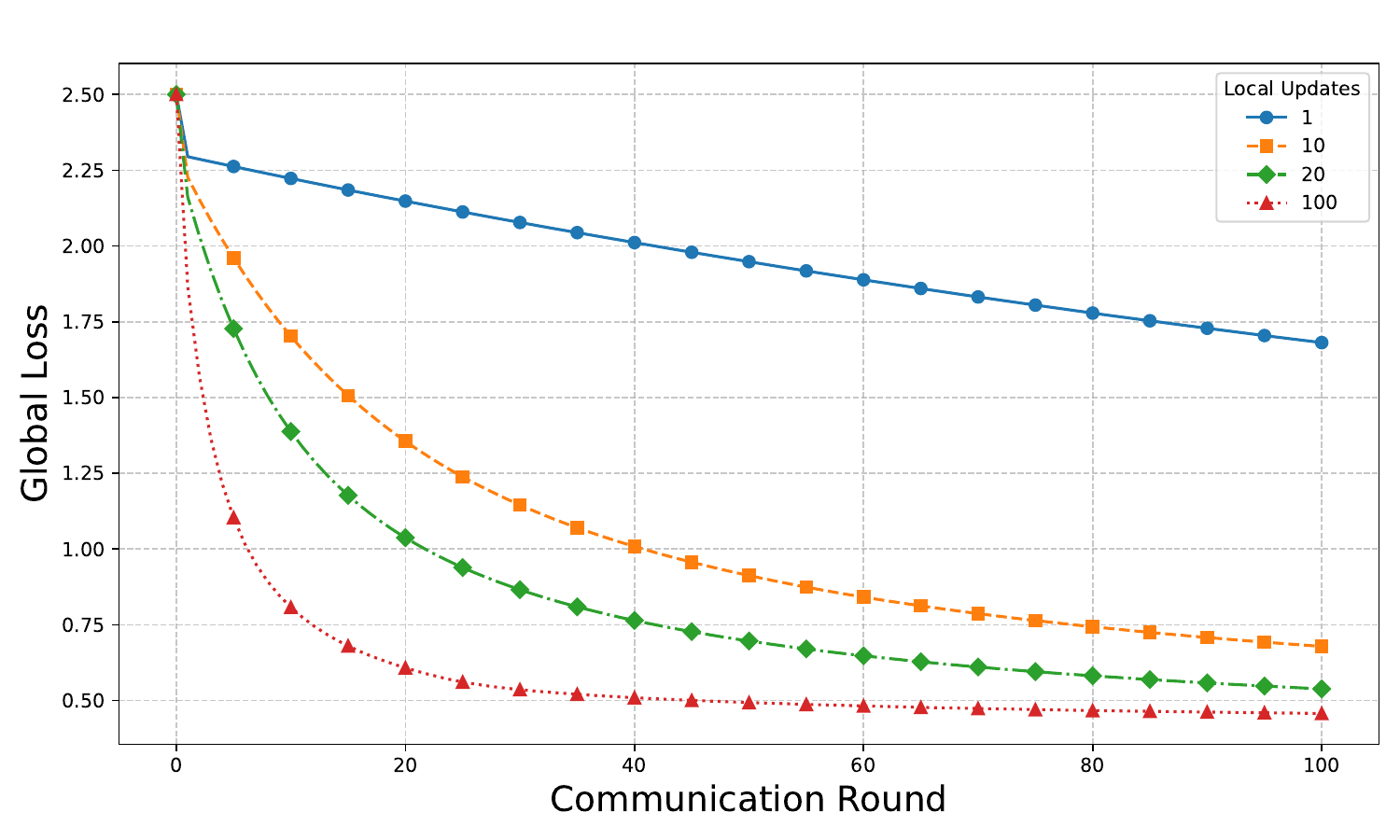}
\caption{MNIST}
\label{fig: Different Local Updates-MNIST}
\end{subfigure}

\caption{Performance of PC-FedAvg for different local steps.}
\label{fig: Different Local Updates}
\end{figure}
\section{CONCLUSION}

We studied a personalized federated optimization problem in which each \mee{agent} maintains a locally constrained decision variable, while cooperation is achieved through an objective evaluated at the population average. We proposed a projection-based multi-block federated algorithm that enforces local feasibility at every iteration while preserving privacy and the standard server--\mee{agent} communication pattern. We established convergence guarantees for both suboptimality and feasibility that match the overall convergence rate of unconstrained FL methods. Experiments on MNIST and CIFAR-10 with heterogeneous $\ell_1$ constraints support the theory and demonstrate competitive performance against standard federated baselines.




\section*{APPENDIX}
\noindent
{{\bf{Proof of Lemma~\ref{lemma:Gi_smooth}}}
Since $f_i$ is $L_f$-smooth, $\nabla f_i$ is $L_f$-Lipschitz. For all $\mathbf{x},\mathbf{y}\in \mathbb{R}^{mn}$, we have
$\|\nabla f_i(A\mathbf{x})-\nabla f_i(A\mathbf{y})\|
= \|A^\top(\nabla f_i(A\mathbf{x})-\nabla f_i(A\mathbf{y}))\|
\le L_f\|A\|^2\|\mathbf{x}-\mathbf{y}\|.$
Moreover, for convex $X_i$, the projection $\Pi_{X_i}$ is nonexpansive, hence
$I-\Pi_{X_i}$ is $1$-Lipschitz and therefore $\nabla h_i(x)=(x-\Pi_{X_i}(x))$ which implies that $\|\nabla h_i(x)-\nabla h_i(y)\|\le \|x-y\|$.
Thus, $\nabla h_i(B_i\mathbf{x})$ is $\|B_i\|^2$-Lipschitz. In a similar approach, $\tfrac{\sigma_i}{2}\left\|B_i\mathbf{x}-A\mathbf{x}\right\|^2$ is $\sigma_i\| B_i-A\|^2$-smooth. By adding the three bounds, we obtain the desired result.
%
%

\noindent
{{\bf{Proof of Lemma~\ref{lemma:Gi_convex}}}
Convexity of $f_i(A\mathbf{x})$:
Let $\mathbf{x},\mathbf{y}\in\mathbb{R}^{mn}$ and $\theta\in[0,1]$. Since $A$ is linear, we have $A(\theta\mathbf{x}+(1-\theta)\mathbf{y})=\theta A\mathbf{x}+(1-\theta)A\mathbf{y}$. By convexity of $f_i$, we obtain
\begin{align*}
f_i\!\left(A(\theta\mathbf{x}+(1-\theta)\mathbf{y})\right)
&=
f_i\!\left(\theta A\mathbf{x}+(1-\theta)A\mathbf{y}\right)\le
\theta f_i(A\mathbf{x})+(1-\theta)f_i(A\mathbf{y}),
\end{align*}
so $ f_i(A\mathbf{x})$ is convex. Because $X_i$ is nonempty, closed, and convex, the squared distance function $h_i(x)$ 
is convex on $\mathbb{R}^n$. Since $B_i$ is linear, $ h_i(B_i\mathbf{x})$
is convex on $\mathbb{R}^{mn}$. In a similar approach, $\tfrac{\sigma_i}{2}\left\|B_i\mathbf{x}-A\mathbf{x}\right\|^2$ is convex. The sum of convex functions is convex, hence $G_{i,\rho}$ is convex.
%
%
\begin{lemma}\em \label{lemma: some basic results}For any random vector $X$ with \fy{a} finite second moment, we have
(a) $\mathbb{E}[\|X\|^2]=\mathbb{E}\left[\|X-\mathbb{E}[X]\|^2\right]+\left\|\mathbb{E}[X]\right\|^2$ and 
(b) $\mathbb{E}\left[\|X-\mathbb{E}[X]\|^2\right]\le \mathbb{E}\left[\|X\|^2\right]$.
\end{lemma}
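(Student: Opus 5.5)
The statement is Lemma~\ref{lemma: some basic results}, two standard identities for a random vector $X$ with finite second moment. I would prove (a) by expanding a square and (b) as a direct consequence of (a).

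For (a), let $\mu\triangleq\mathbb{E}[X]$. This vector is finite, since finite second moment implies finite first moment by Jensen or Cauchy--Schwarz. Write $X=(X-\mu)+\mu$ and expand:
\begin{align*}
\|X\|^2=\|X-\mu\|^2+2\langle X-\mu,\mu\rangle+\|\mu\|^2 .
\end{align*}
Every term is integrable. Indeed, $\|X-\mu\|^2\le 2\|X\|^2+2\|\mu\|^2$, and $|\langle X-\mu,\mu\rangle|\le\|X-\mu\|\,\|\mu\|$, which is integrable because $X$ has a finite first moment. Taking expectations and using linearity, the cross term becomes $2\langle \mathbb{E}[X]-\mu,\mu\rangle=0$, because $\mu$ is deterministic. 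This gives
\[
\mathbb{E}[\|X\|^2]=\mathbb{E}[\|X-\mathbb{E}[X]\|^2]+\|\mathbb{E}[X]\|^2 .
\]

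For (b), drop the nonnegative term $\|\mathbb{E}[X]\|^2$ from the right-hand side of (a). That immediately gives $\mathbb{E}[\|X-\mathbb{E}[X]\|^2]\le\mathbb{E}[\|X\|^2]$.

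The same argument works verbatim for conditional expectations given $\mathcal{F}_k$, which is how the lemma is applied in the variance steps. The only point that needs care is the integrability of the cross term, and that is settled by the bound $|\langle X-\mu,\mu\rangle|\le\|X-\mu\|\,\|\mu\|$ above.
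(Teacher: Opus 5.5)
Your proof is correct. Expanding $\|X\|^2=\|X-\mu\|^2+2\langle X-\mu,\mu\rangle+\|\mu\|^2$, eliminating the cross term by linearity, and dropping $\|\mathbb{E}[X]\|^2\ge 0$ for (b) is the standard argument; the paper states this lemma without proof as a textbook fact. One precision on your closing remark: in the conditional version $\mu=\mathbb{E}[X\mid\mathcal{F}_k]$ is not deterministic. The cross term vanishes instead because $\mu$ is $\mathcal{F}_k$-measurable and can be pulled out of the conditional expectation, with integrability from Cauchy--Schwarz and $\mathbb{E}[\|\mu\|^2]\le\mathbb{E}[\|X\|^2]$, so the argument does not carry over ``verbatim.''
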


\section*{ACKNOWLEDGMENT}
The authors thank Neil Davies, an undergraduate student at Rutgers University, for his assistance with the numerical experiments.
%

\bibliographystyle{IEEEtran}
\bibliography{L-CSS-Bib}

@article{li2020federated,
  title={Federated optimization in heterogeneous networks},
  author={Li, Tian and Sahu, Anit Kumar and Zaheer, Manzil and Sanjabi, Maziar and Talwalkar, Ameet and Smith, Virginia},
  journal={Proceedings of Machine learning and systems},
  volume={2},
  pages={429--450},
  year={2020}
}

@article{predd2006distributed,
  title={Distributed Learning in Wireless Sensor Networks},
  author={Predd, Joel B. and Kulkarni, Sanjeev R. and Poor, H. Vincent},
  journal={IEEE Signal Processing Magazine},
  volume={23},
  number={4},
  pages={56--69},
  year={2006}
}

@article{lin2022ppfl_tsg,
  title={Privacy-Preserving Household Characteristic Identification With Federated Learning Method},
  author={Lin, Jun and Ma, Jin and Zhu, Jianguo},
  journal={IEEE Transactions on Smart Grid},
  volume={13},
  number={2},
  pages={1088--1099},
  year={2022}
}

@article{nedic2009distributed,
  title={Distributed Subgradient Methods for Multi-Agent Optimization},
  author={Nedi{\'c}, Angelia and Ozdaglar, Asuman},
  journal={IEEE Transactions on Automatic Control},
  volume={54},
  number={1},
  pages={48--61},
  year={2009}
}

@article{nedic2010constrained,
  title={Constrained Consensus and Optimization in Multi-Agent Networks},
  author={Nedi{\'c}, Angelia and Ozdaglar, Asuman and Parrilo, Pablo A.},
  journal={IEEE Transactions on Automatic Control},
  volume={55},
  number={4},
  pages={922--938},
  year={2010}
}

@inproceedings{yuan2021federated,
  title={Federated composite optimization},
  author={Yuan, Honglin and Zaheer, Manzil and Reddi, Sashank},
  booktitle={International Conference on Machine Learning},
  pages={12253--12266},
  year={2021},
  organization={PMLR}
}

@article{nguyen2023geometric,
  title={Geometric convergence of distributed heavy-ball Nash equilibrium algorithm over time-varying digraphs with unconstrained actions},
  author={Nguyen, Duong Thuy Anh and Nguyen, Duong Tung and Nedi{\'c}, Angelia},
  journal={IEEE Control Systems Letters},
  volume={7},
  pages={1963--1968},
  year={2023},
  publisher={IEEE}
}

@inproceedings{mcmahan2017communication,
  title={Communication-efficient learning of deep networks from decentralized data},
  author={McMahan, Brendan and Moore, Eider and Ramage, Daniel and Hampson, Seth and y Arcas, Blaise Aguera},
  booktitle={Artificial intelligence and statistics},
  pages={1273--1282},
  year={2017},
  organization={Pmlr}
}

@inproceedings{majcherczyk2021flow,
  title={Flow-fl: Data-driven federated learning for spatio-temporal predictions in multi-robot systems},
  author={Majcherczyk, Nathalie and Srishankar, Nishan and Pinciroli, Carlo},
  booktitle={2021 IEEE international conference on robotics and automation (ICRA)},
  pages={8836--8842},
  year={2021},
  organization={IEEE}
}

@article{berkenkamp2017safe,
  title={Safe model-based reinforcement learning with stability guarantees},
  author={Berkenkamp, Felix and Turchetta, Matteo and Schoellig, Angela and Krause, Andreas},
  journal={Advances in neural information processing systems},
  volume={30},
  year={2017}
}

@inproceedings{karimireddy2020scaffold,
  title={Scaffold: Stochastic controlled averaging for federated learning},
  author={Karimireddy, Sai Praneeth and Kale, Satyen and Mohri, Mehryar and Reddi, Sashank and Stich, Sebastian and Suresh, Ananda Theertha},
  booktitle={International conference on machine learning},
  pages={5132--5143},
  year={2020},
  organization={PMLR}
}

@article{kairouz2021advances,
  title={Advances and open problems in federated learning},
  author={Kairouz, Peter and McMahan, H Brendan},
  journal={Foundations and trends in machine learning},
  volume={14},
  number={1-2},
  pages={1--210},
  year={2021},
  publisher={Emerald Publishing Limited}
}

@inproceedings{khaled2020tighter,
  title={Tighter theory for local SGD on identical and heterogeneous data},
  author={Khaled, Ahmed and Mishchenko, Konstantin and Richt{\'a}rik, Peter},
  booktitle={International conference on artificial intelligence and statistics},
  pages={4519--4529},
  year={2020},
  organization={PMLR}
}

@article{he2024federated,
  title={Federated learning with convex global and local constraints},
  author={He, Chuan and Peng, Le and Sun, Ju},
  journal={Transactions on machine learning research},
  volume={2024},
  pages={https--openreview},
  year={2024}
}

@inproceedings{akgun2024projected,
  title={Projected push-pull for distributed constrained optimization over time-varying directed graphs},
  author={Akg{\"u}n, Orhan Eren and Day{\i}, Arif Kerem and Gil, Stephanie and Nedi{\'c}, Angelia},
  booktitle={2024 American Control Conference (ACC)},
  pages={2082--2089},
  year={2024},
  organization={IEEE}
}

@article{fallah2020personalized,
  title={Personalized federated learning with theoretical guarantees: A model-agnostic meta-learning approach},
  author={Fallah, Alireza and Mokhtari, Aryan and Ozdaglar, Asuman},
  journal={Advances in neural information processing systems},
  volume={33},
  pages={3557--3568},
  year={2020}
}

@article{t2020personalized,
  title={Personalized federated learning with moreau envelopes},
  author={T Dinh, Canh and Tran, Nguyen and Nguyen, Josh},
  journal={Advances in neural information processing systems},
  volume={33},
  pages={21394--21405},
  year={2020}
}

@inproceedings{li2021ditto,
  title={Ditto: Fair and robust federated learning through personalization},
  author={Li, Tian and Hu, Shengyuan and Beirami, Ahmad and Smith, Virginia},
  booktitle={International conference on machine learning},
  pages={6357--6368},
  year={2021},
  organization={PMLR}
}

@inproceedings{collins2021exploiting,
  title={Exploiting shared representations for personalized federated learning},
  author={Collins, Liam and Hassani, Hamed and Mokhtari, Aryan and Shakkottai, Sanjay},
  booktitle={International conference on machine learning},
  pages={2089--2099},
  year={2021},
  organization={PMLR}
}

@inproceedings{shamsian2021personalized,
  title={Personalized federated learning using hypernetworks},
  author={Shamsian, Aviv and Navon, Aviv and Fetaya, Ethan and Chechik, Gal},
  booktitle={International conference on machine learning},
  pages={9489--9502},
  year={2021},
  organization={PMLR}
}

\end{document}